\pdfoutput=1 
\documentclass{article} 

\makeatletter
\let\mgpo@origaddcontentsline\addcontentsline
\makeatother

\usepackage{iclr2027_conference,times}

\makeatletter
\let\addcontentsline\mgpo@origaddcontentsline
\makeatother

\usepackage{enumitem}

\usepackage[utf8]{inputenc} 
\usepackage[T1]{fontenc}    
\usepackage{hyperref}       
\usepackage{url}            
\usepackage{booktabs}       
\usepackage{amsfonts}       
\usepackage{nicefrac}       
\usepackage{microtype}      
\usepackage{xcolor}         
\usepackage{multirow}
\usepackage{arydshln} 
\usepackage{color , colortbl}
\usepackage{placeins}
\usepackage{algorithm}
\usepackage{algorithmic}
\usepackage{graphicx}
\usepackage{amsmath, amssymb, amsthm, mathtools}
\usepackage{cleveref}
\usepackage{caption}
\usepackage{titletoc}
\hypersetup{colorlinks=true,linkcolor=red!60!black,citecolor=blue!60!black,urlcolor=magenta!60!black}
\newcommand{\mgpo}{\texttt{MGPO}}
\newcommand{\ft}{\textsc{Pixel Reward}}  
\newcommand{\sss}{\textsc{MST Reward}}   
\definecolor{tabhighlight}{rgb}{0.925,1,1}

\theoremstyle{plain}
\newtheorem{theorem}{Theorem}
\newtheorem{lemma}[theorem]{Lemma}
\newtheorem{proposition}[theorem]{Proposition}
\newtheorem{corollary}[theorem]{Corollary}

\theoremstyle{definition}

\theoremstyle{remark}
\newtheorem{remark}[theorem]{Remark}

\newtheorem*{remark*}{Remark}   
\newtheorem*{note*}{Note}
\newcommand{\manifold}{\mathcal{M}}        
\newcommand{\R}{\mathbb{R}}                

\DeclareMathOperator{\reach}{reach}        
\DeclareMathOperator*{\argmin}{arg\,min}

\newcommand{\defeq}{\coloneqq}             

\let\cite\citep

\title{MGPO: Manifold-Guided Diffusion Alignment for Task-Aware Dataset Distillation}

\author{
  Yunyi Chen$^{1,2}$\thanks{Equal contribution.}\quad
  Chenru Wang$^{1}$\footnotemark[1]\quad
  Xinyi Ye$^{1}$\quad
  Zexin Zheng$^{3}$\quad
  Chi Zhang$^{1}$\thanks{Corresponding author.} \\[2pt]
  {\normalfont $^{1}$AGI Lab, Westlake University\quad
  $^{2}$Eindhoven University of Technology} \\
  {\normalfont $^{3}$Guangdong University of Finance and Economics}
}

\iclrfinalcopy 
\begin{document}

\maketitle
\lhead{}\renewcommand{\headrulewidth}{0pt} 
\vspace{-4.5mm}
\begin{figure}[h]
    \centering
    \includegraphics[width=1.0\linewidth]{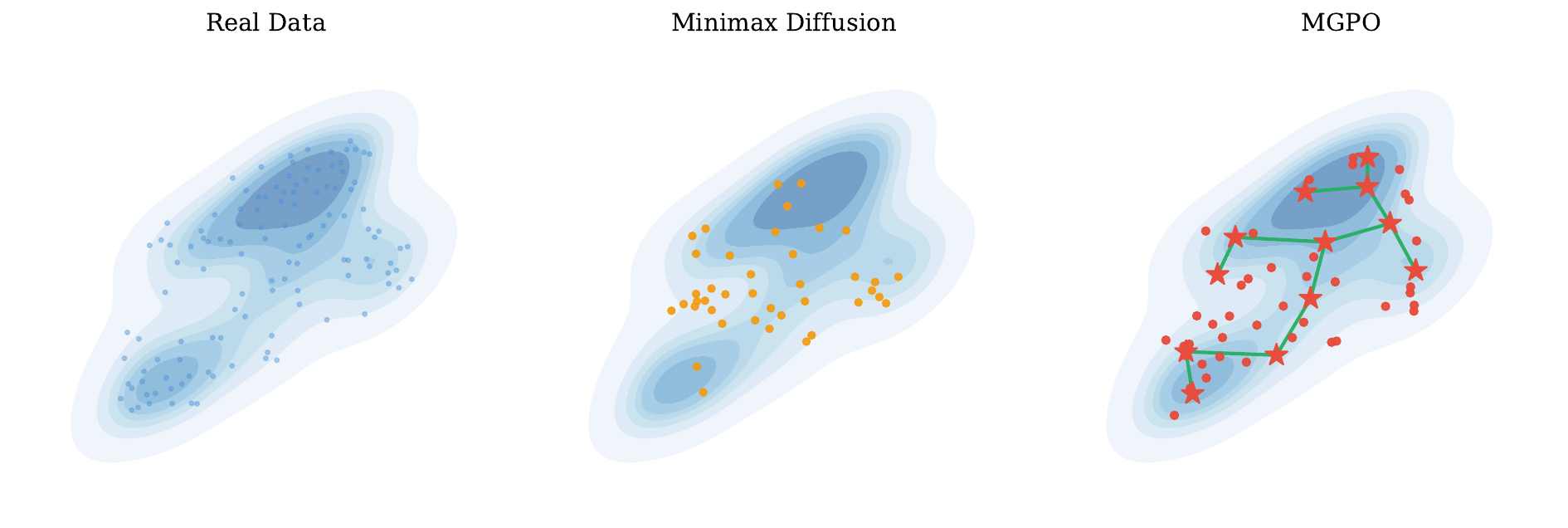}
\caption{
t-SNE of latent distributions on ImageWoof. Blue density: real data; orange dots: 
Minimax Diffusion; red dots: \mgpo~(ours); red stars with green edges: class-wise 
MST skeleton. \mgpo~covers a broader area along the MST skeleton, indicating 
improved manifold coverage.
}
\label{fig:tsne-mst}
\end{figure}

\begin{abstract}
Diffusion-based dataset distillation (DD) suffers from a fundamental 
objective mismatch: likelihood-driven diffusion models prioritize density 
approximation over the discriminative decision boundaries required for 
downstream tasks. Beyond semantic mismatch, relying solely on 
density also leads to geometric coverage loss, where generated samples collapse 
into a few high-density modes and fail to cover the manifold's structural 
diversity. We propose Manifold-Guided Policy Optimization (\mgpo), 
which reformulates DD as a multi-objective reinforcement learning problem 
and achieves Dual-Space Alignment via a pixel-space discriminative 
reward and a latent-space geometric reward guided by a class-wise Minimum 
Spanning Tree (MST). The discriminative reward enforces class 
separability, while the MST-based geometric reward encourages generated latents to cover a sparse geometric skeleton of each class, jointly addressing both failure 
modes. We further provide an idealized analysis that motivates the MST-based reward, including a Hausdorff approximation bound and a subsampling bound 
independent of the dataset size. The reward-modular design extends to structured tasks such as object detection and segmentation  by substituting the frozen task reward model. 
Extensive experiments show \mgpo~consistently outperforms existing methods, 
including a +8.0\% mIoU gain on segmentation under low-budget settings.
\end{abstract}
\section{Introduction}
\label{sec:intro}

Dataset distillation (DD)~\cite{wang2018dataset} has emerged as a critical technique in large-scale machine learning, aiming to synthesize a small, representative subset of data from a massive original dataset. Recently, generative models, particularly diffusion models~\cite{ddpm,DDIM}, have emerged as effective backbones for this task. By leveraging powerful generative priors, diffusion models can effectively capture the complex underlying distribution of the source data to synthesize diverse and high-fidelity images. Unlike traditional gradient-matching~\cite{GM} or trajectory-matching~\cite{MTT} approaches, which often struggle with optimization stability and limited sample diversity, these generative frameworks naturally overcome such bottlenecks. Consequently, diffusion-based methods have established state-of-the-art (SOTA) performance in dataset distillation~\cite{minimax,D4M,cao2,mgd3,ddvlcp,unidd}, offering superior results within limited computational budgets.

Despite recent advances, diffusion-based dataset distillation under extremely low-budget settings faces two distinct challenges. First, a \textit{semantic objective mismatch}: likelihood-driven diffusion models prioritize pixel-level density approximation over the discriminative features required by downstream tasks~\cite{biroli2024dynamical,ddpm,karczewski2025diffusionmodelscartoonistscurious}. Second, a \textit{geometric coverage loss}: relying solely on density or naive classifier guidance often leads to mode collapse, failing to capture the complex geometric manifold structure and structural diversity of each class in the original data distribution. Consequently, generated samples may exhibit high perceptual fidelity yet fail to effectively represent the dataset.

These challenges show that diffusion-based distillation must transcend perceptual fidelity by resolving both issues at the same time. An effective distilled dataset requires two orthogonal properties: semantic discriminability, which pushes samples toward highly separable regions to better serve downstream tasks; and geometric preservation, which spreads samples along a sparse geometric skeleton of each class to maintain structural diversity and prevent mode collapse~\cite{guo2024gradient}.

Following these principles, we propose Manifold-Guided Policy Optimization (\mgpo), a framework that reformulates diffusion-based dataset distillation as a multi-objective reinforcement learning problem. Our core contribution is a Dual-Space Alignment strategy that addresses both challenges in their respective domains. We design a pixel-space discriminative reward to enforce class separability, alongside a latent-space geometric reward guided by a class-wise Minimum Spanning Tree (MST) to encourage geometric coverage. Crucially, \mgpo~utilizes a decoupled group-wise policy optimization strategy to internalize these heterogeneous, non-differentiable rewards directly into the diffusion weights. This RL formulation eliminates the need for expensive inference-time guidance, ensuring highly efficient sampling while harmonizing semantics and geometry.

We further provide an idealized analysis showing that, under smoothness and reach assumptions, a Euclidean MST $\delta$-approximates the class manifold in Hausdorff distance with $\delta^2/\tau$ tightness in the backward direction. This analysis motivates the distance term of the geometric reward; the implemented reward is evaluated empirically.

Beyond classification, the flexibility of our RL formulation unlocks a new reward-modular paradigm for dataset distillation. While the MST-based geometric prior is inherently task-agnostic, the outer framework can be adapted to complex downstream tasks, such as object detection and segmentation, by substituting the pixel-space reward model. In such scenarios, the same frozen task model provides the reward during fine-tuning and pseudo-annotates the class-conditioned generated images afterwards, without modifying the core generative architecture.

Extensive empirical evaluations across multiple benchmarks demonstrate that \mgpo~consistently outperforms existing SOTA approaches. For instance, under representative low-budget settings, our method achieves 18.3 mIoU on segmentation tasks, surpassing the strongest baseline by +8.0 mIoU.

Our contributions are summarized as follows:

\begin{itemize}[leftmargin=4mm]
    \item We reformulate diffusion-based dataset distillation as a multi-objective reinforcement learning problem, internalizing task-specific rewards directly into the diffusion weights and eliminating inference-time guidance overhead.
    
    \item We design a dual-space reward with a pixel-level discriminative term and a latent-level MST geometric term, addressing the semantic mismatch and geometric coverage loss.
    
    \item We provide an idealized analysis motivating the MST-based reward: under smoothness and reach assumptions, (i) a Euclidean MST approximates the class manifold in Hausdorff distance, and (ii) a subsampled MST admits a bound independent of the dataset size. 
    
    \item Extensive experiments show \mgpo~achieves SOTA performance on standard classification benchmarks and extends to other complex structured tasks like object detection and semantic segmentation.
\end{itemize}

\begin{figure*}[t]
    \centering
    \includegraphics[width=\linewidth]{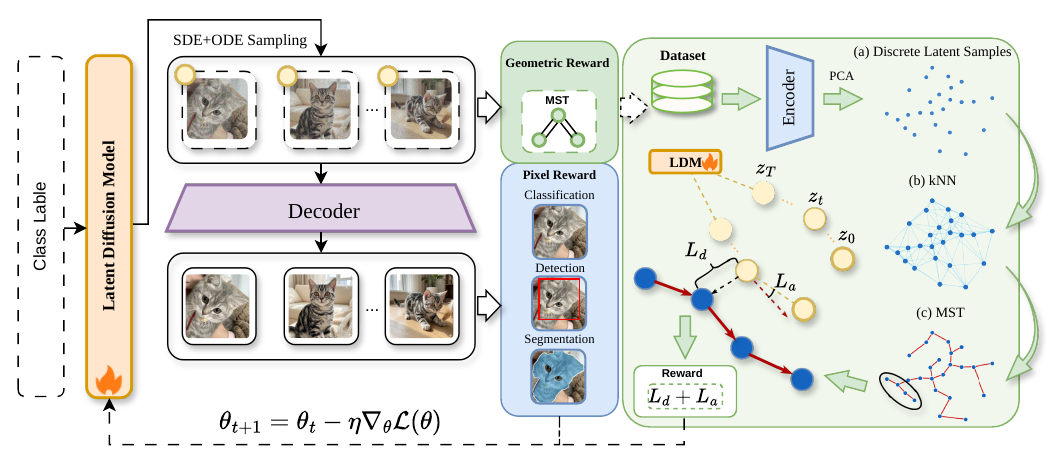}
    \caption{Overview of the proposed~\mgpo~~framework. 
(a)~\ft~($r_{\text{disc}}$) is computed from downstream objectives such as classification, detection, and segmentation, encouraging semantic correctness and task consistency.
(b)~\sss~($r_{\text{geo}}$) is constructed in latent space. Real dataset features are encoded by 
encoder and projected via PCA to obtain a structured latent representation. A kNN graph is built to approximate local neighborhood structure, followed by a Minimum Spanning Tree (MST) to capture the global manifold topology. The generated latent samples are aligned with this manifold structure, promoting both representativeness and structural consistency.
}
    \label{fig:pipeline}

\end{figure*}
\vspace{-20pt}
\section{Related Work}

\subsection{Diffusion Models for Dataset Distillation}

Recent work has explored diffusion models as powerful generative backbones for DD. 
Minimax~\cite{minimax} fine-tunes pre-trained diffusion models under a minimax objective to balance sample representativeness and diversity. 
D$^4$M~\cite{D4M} incorporates text-to-image diffusion with prototype learning to improve intra-class consistency. 
D3HR~\cite{D3HR} addresses distribution mismatch via DDIM inversion~\cite{DDIM} and group-wise sampling, aligning generated and real data distributions. 
CaO2~\cite{cao2} performs inference-time latent optimization to reduce feature level inconsistencies, while DDVLCP~\cite{ddvlcp} leverages vision–language prototypes to enhance semantic coherence. Despite their success, diffusion-based approaches are inherently likelihood-centric, concentrating samples in high-density regions, which preserves perceptual fidelity but neglects boundary and hard examples essential for discriminative training. Most related to our work, ManifoldGD~\cite{roy2026manifoldgd} explores manifold structure for diffusion-based distillation, but relies on inference-time guidance without formal geometric guarantees. In contrast, \textbf{~\mgpo~} internalizes geometric constraints into the diffusion policy via RL,  with an idealized Hausdorff analysis motivating the MST-based reward, while simultaneously resolving the semantic objective mismatch through dual-space alignment.

\subsection{Reinforcement Learning in Diffusion Models}

Reinforcement learning has recently been introduced to align diffusion and flow-based generative models with task-level objectives beyond likelihood-based training. 
A representative approach is Group Relative Policy Optimization (GRPO)~\cite{shao2024deepseekmathpushinglimitsmathematical}, which stabilizes policy updates by comparing relative rewards within groups of sampled trajectories. Subsequent works extend GRPO to generative settings. 
Flow-GRPO~\cite{liu2025flowgrpo} adapts it to diffusion and flow matching models, while Temp-Flow-GRPO~\cite{tempflow} introduces time-dependent credit assignment. 
Mix-GRPO~\cite{li2026mixgrpounlockingflowbasedgrpo} combines multiple sampling dynamics, and structured variants such as Branch-GRPO~\cite{li2025branchgrpostableefficientgrpo} and Dance-GRPO~\cite{xue2025dancegrpo} improve scalability through branching and group-wise rollouts. However, existing methods focus primarily on generative quality or preference alignment. 
In contrast, we leverage reinforcement learning to directly optimize downstream discriminative objectives. More related works see Appendix~\ref{sec: related}
\section{Preliminary}

\subsection{Dataset Distillation}
Dataset distillation aims to synthesize a small and compact dataset
\(
\mathcal{S} = \{(\tilde{x}_i, \tilde{y}_i)\}_{i=1}^{N_S}
\)
that preserves the essential training information of a large-scale real dataset
\(
\mathcal{T} = \{(x_i, y_i)\}_{i=1}^{N_T},
\)
where \(x_i\) denotes an input sample (e.g., an image), \(y_i\) its corresponding label, and \(N_S \ll N_T\).
The goal of dataset distillation is to enable models trained on \(\mathcal{S}\) to achieve performance comparable to those trained on \(\mathcal{T}\), while significantly reducing storage and computational costs.

\subsection{DDIM Sampling in Latent Space}
\label{subsec: DDIM}
We adopt a latent diffusion formulation,
where $z_0 = E(x_0)$ denotes the VAE-encoded latent,
and diffusion is performed directly on the latent variables $z_t$.
The DiT backbone parameterizes the noise prediction
$\epsilon_\theta(z_t, t)$ in this latent space.
We model the reverse diffusion dynamics using the generalized sampling process of Denoising Diffusion Implicit Models~(DDIM)~\cite{DDIM}. The transition from state $z_t$ to $z_{t-1}$ is defined as:
\begin{equation}
    z_{t-1} = \sqrt{\alpha_{t-1}} \left( \frac{z_t - \sqrt{1-\alpha_t}\epsilon_\theta(z_t, t)}{\sqrt{\alpha_t}} \right) + \sqrt{1 - \alpha_{t-1} - \sigma_t^2} \cdot \epsilon_\theta(z_t, t) + \sigma_t \epsilon_t,
\end{equation}
where $\epsilon_t \sim \mathcal{N}(0, I)$ is standard Gaussian noise and $\alpha_t$ follows the standard variance schedule. This non Markovian formulation unifies deterministic and stochastic sampling through the variance parameter $\sigma_t= \eta \sqrt{\frac{1 - \alpha_{t-1}}{1 - \alpha_t}} \sqrt{1 - \frac{\alpha_t}{\alpha_{t-1}}}$, which is parameterized by a scalar $\eta$.
The hyperparameter $\eta$ explicitly controls the degree of stochasticity. When $\eta = 0$, the random noise term vanishes, reducing the sampling process to the strictly deterministic DDIM formulation. Conversely, when $\eta = 1$, the process recovers the stochastic Markovian transitions of traditional DDPM~\cite{ddpm}.

During the reward driven policy optimization phase, we maintain $\eta = 1$ to encourage stochastic trajectory exploration. Once the generative policy is fully aligned with our dual space objectives, we set $\eta = 0$ to perform highly efficient and deterministic dataset synthesis.

\section{Method}
\label{sec:method}

To bridge the gap between likelihood driven diffusion generation and discriminative downstream tasks, \textbf{~\mgpo~} internalizes dual space task objectives into the diffusion backbone via RL.

\subsection{Dual-Space Reward Design}
\label{subsec:rewards}

To align diffusion generation with downstream objectives, we design 
two complementary rewards in different spaces. The discriminative 
reward encourages semantic separability in pixel space, while the geometric 
reward encourages coverage of class-level geometric structure in latent space.

\noindent\textbf{Discriminative Reward ($r_{disc}$).}
To ensure the synthesized samples are informative for training downstream
models, we employ a pre-trained proxy classifier $f_\phi$.
For a generated sample $\tilde{x}_0$ with target label $c$, the reward is defined
as the log probability of the ground-truth class:

\begin{equation}
r_{disc}(\tilde{x}_0, c)
=
\log
\left(
\frac{\exp(f_\phi(\tilde{x}_0)_c)}
{\sum_k \exp(f_\phi(\tilde{x}_0)_k)}
\right),
\label{r_disc}
\end{equation}

where $f_\phi(\tilde{x}_0)_c$ denotes the predicted logit for class $c$.
This continuous formulation provides a smooth reward landscape and penalizes
samples that lie near ambiguous decision boundaries.

\noindent\textbf{Manifold Geometric Reward ($r_{\mathrm{geo}}$).}
Likelihood-driven diffusion models tend to concentrate samples in high-density 
regions, which may overlook geometrically informative 
regions of the class manifold. To encourage geometric coverage of each class, we introduce 
a geometric reward grounded in a class-wise Minimum Spanning Tree (MST), which provides a sparse geometric skeleton of 
the class distribution and is more discriminative than density-based regularization.

Concretely, let $\mathcal{Z}_c = \{z_i\}_{i=1}^{N_c}$ denote the VAE-encoded 
latent representations of class-$c$ samples. We model the class manifold via a 
weighted graph $G_c = (\mathcal{Z}_c, E_c)$ with edge weights 
$w(u, v) = \|u - v\|_2$, and define the class-wise MST as
\begin{equation}
    \mathcal{T}_c = \arg\min_{T \subseteq G_c} \sum_{(u, v) \in T} w(u, v),
\end{equation}
where $T$ spans all nodes and contains no cycles. To ensure scalability, in 
practice we construct $\mathcal{T}_c$ from a subsampled latent set on a sparse 
$k$-nearest-neighbor graph (see Algorithm~\ref{alg:mst}).

Given a generated latent $z_0$ corresponding to $\tilde x_0$, we project it 
into the normalized latent space via $u_{\mathrm{gen}} = g_c(z_0)$, and define 
the geometric reward as

\begin{equation}
\begin{gathered}
r_{\mathrm{geo}}(\tilde x_0, c)
= -\lambda_{\mathrm{dist}}\min_{v \in \mathcal{V}_c}
\|u_{\mathrm{gen}} - v\|_2
-\lambda_{\mathrm{dir}}\bigl(1 -
|\cos(\mathbf{v}_{\mathrm{gen}}, \mathbf{d}_{e^*})|\bigr), \\
e^* = \argmin_{(i,j) \in \mathcal{E}_c}
\bigl\|u_{\mathrm{mid}} - \tfrac{v_i + v_j}{2}\bigr\|_2.
\end{gathered}
\end{equation}

where $\mathcal{V}_c$ and $\mathcal{E}_c$ are the node and edge sets of $\mathcal{T}_c$, the first term is the Euclidean distance from the projected terminal latent to its nearest MST node, and $\lambda_{\mathrm{dir}}$ controls the strength 
of the directional alignment term (analysis in Appendix). The transition 
direction is $\mathbf{v}_{\mathrm{gen}} = g_c(z_0) - g_c(z_{\mathrm{mid}})$, 
where $z_{\mathrm{mid}}$ is a latent sampled from an intermediate diffusion 
timestep, and $\mathbf{d}_{e^*}$ is the direction of the MST edge $e^*$ whose midpoint is nearest to the projected intermediate latent $u_{\mathrm{mid}} = g_c(z_{\mathrm{mid}})$. The first term encourages generated samples to remain close to the manifold 
skeleton, while the second encourages alignment with its local direction. 
By incorporating $r_{\mathrm{geo}}$ into policy optimization, the diffusion 
model internalizes the geometric prior  during fine-tuning, removing the need for inference-time guidance.

\paragraph{Theoretical Justification.}
We provide two idealized results that motivate the MST-based geometric reward and its subsampled construction. Both concern a Euclidean MST built on samples from a smooth class manifold and an edge-distance formulation. They do not cover the VAE/PCA representation, the nearest-node distance used in practice, or the direction term, which we evaluate empirically.
{Let $\mathcal{M}_c$ be a compact, connected $C^2$ class submanifold with reach $\tau_c > 0$, and let the MST nodes cover $\mathcal{M}_c$ with radius $\delta < \tau_c$. Theorem~\ref{thm:mst} (Appendix~\ref{proof:mstasms}) shows that $d_H(\mathcal{M}_c, |\mathcal{T}_c|) \le \delta$, with the sharper backward bound $\sup_{y \in |\mathcal{T}_c|} d(y, \mathcal{M}_c) \le \delta^2/\tau_c$; that is, in this idealized setting the MST is a reach-aware geometric approximation of the class manifold. Theorem~\ref{thm:subsample-mst} (Appendix~\ref{proof:subsample-mst}) shows that an MST built on $n$ i.i.d.\ samples satisfies $d_H(|\mathcal{T}(V_c)|, |\mathcal{T}(U_c)|) = \mathcal{O}\bigl((\log(n/\eta)/n)^{1/m_c}\bigr)$ with probability at least $1-\eta$, where $m_c$ is the intrinsic dimension; this bound is independent of the full dataset size $N_c$, so the per-class MST construction cost stays constant as datasets scale. These results together motivate our design of the implemented reward.}


\subsection{~\mgpo~: Manifold-Guided Policy Optimization}

We formulate diffusion-based dataset distillation as a sequential decision 
process~(see Appendix~\ref{sec:makchain}), where the diffusion model acts as a stochastic policy that gradually 
transforms Gaussian noise into a synthetic sample. At each timestep, the policy 
$\pi_\theta(z_{t-1}\mid z_t,c)$ predicts the reverse diffusion transition 
conditioned on the class label $c$. 

Following the generalized DDIM formulation introduced in Sec.~\ref{subsec: DDIM}, we set 
$\eta=1$ during policy optimization to introduce stochasticity into the sampling process. This injected stochasticity is vital as it enables the policy to explore a broader range within latent space,  while task-driven rewards guide the trajectories toward regions 
that are both discriminative and geometrically structured.

\noindent\textbf{Policy Optimization.}
To optimize the generative policy without training an explicit value network, 
we adopt a relative policy optimization strategy. For each class 
condition $c$, we sample a group of $G$ trajectories $\{z_{0}^{(i)}\}_{i=1}^G$ where $G$ denotes the rollout group size 
(see Sec.~\ref{sec:abl}), and evaluate them using the 
dual-space rewards defined in Sec.~\ref{subsec:rewards}. The diffusion policy is updated by 
maximizing a clipped surrogate objective:
\begin{equation}
\mathcal{J}(\theta) =
\mathbb{E}_{c \sim p(c), \epsilon}
\left[
\frac{1}{G}
\sum_{i=1}^{G}
\min\big(
\rho_i \hat{A}_i,
\mathrm{clip}(\rho_i,1-\epsilon,1+\epsilon)\hat{A}_i
\big)
- \beta\, \mathbb{D}_{\mathrm{KL}}\bigl(\pi_\theta \,\|\, \pi_{\theta_{\mathrm{ref}}}\bigr)
\right],
\label{eq: mgpo_objective}
\end{equation}

where $\rho_i =
\frac{\pi_\theta(z^{(i)}_{t-1}\mid z^{(i)}_t,c)}
{\pi_{\theta_{\text{old}}}(z^{(i)}_{t-1}\mid z^{(i)}_t,c)}$
denotes the importance sampling ratio and $\hat{A}_i$ is the aggregated advantage of the $i$-th trajectory. 

The MST skeleton and the projection function $g_c(\cdot)$ are pre-computed following Algorithm~\ref{alg:mst}. ~\mgpo~ training procedure is summarized in Algorithm~\ref{alg:mgpo}.

\begin{algorithm}[t]
\caption{~\mgpo~ Training Procedure}
\label{alg:mgpo}
\begin{algorithmic}[1]

\REQUIRE Diffusion policy $\pi_\theta$, proxy reward model $f_\phi$, 
class-wise MST skeleton $\mathcal{T}_c$ and projection function $g_c(\cdot)$

\FOR{each training iteration}

\STATE Sample class label $c \sim p(c)$

\STATE Generate $G$ trajectories $\{\tau_i\}_{i=1}^{G}$ via SDE and ODE sampling

\FOR{each trajectory $\tau_i = \{z_T^{(i)},\dots,z_0^{(i)}\}$}

\STATE Compute geometric reward $r_{\text{geo}}^{(i)}$
using latent projection $u_{\text{gen}}^{(i)} = g_c(z_0^{(i)})$

\STATE Decode image $\tilde{x}_0^{(i)} = D(z_0^{(i)})$

\STATE Compute discriminative reward $r_{\text{disc}}^{(i)}$ using (Eq.~\ref{r_disc})

\ENDFOR

\STATE Perform group-wise normalization of rewards (Eq.~\ref{eq: group_normal})

\STATE Aggregate advantages across objectives (Eq.~\ref{eq: aggregation})

\STATE Apply batch-level normalization (Eq.~\ref{eq: batch_normal})

\STATE Update diffusion policy using the ~\mgpo~ objective (Eq.~\ref{eq: mgpo_objective})

\ENDFOR

\end{algorithmic}
\end{algorithm}

\noindent\textbf{Multi-Reward Aggregation.}
A key challenge in dual-space optimization is that semantic and geometric 
rewards may lie on different numerical scales. Directly combining them can lead 
to unstable training. Following GDPO~\cite{liu2026gdpogrouprewarddecouplednormalization}, we adopt a three-stage advantage normalization strategy.

First, for each reward type $k \in \{\text{disc},\text{geo}\}$, we perform 
group-wise normalization across the $G$ rollouts:
\begin{equation}
A_{i,k} =
\frac{
r_k^{(i)} - \mathrm{mean}(\{r_k^{(j)}\}_{j=1}^{G})
}{
\mathrm{std}(\{r_k^{(j)}\}_{j=1}^{G}) + \epsilon
}.
\label{eq: group_normal}
\end{equation}

This step removes scale differences while preserving the relative ranking of 
samples within the group.

Second, the normalized advantages are combined through a weighted summation:
\begin{equation}
A_i^{\text{sum}} =
w_{\text{disc}} A_{i,\text{disc}}
+
w_{\text{geo}} A_{i,\text{geo}}.
\label{eq: aggregation}
\end{equation}
where $w_{\text{disc}}$ and $w_{\text{geo}}$ weight the discriminative and geometric rewards 
(Sec.~\ref{sec:abl}).

Finally, we apply batch-level normalization across all samples in the training batch:
\begin{equation}
\hat{A}_i =
\frac{
A_i^{\text{sum}}
-
\mathrm{mean}(\{A_j^{\text{sum}}\})
}{
\mathrm{std}(\{A_j^{\text{sum}}\}) + \epsilon
}.
\label{eq: batch_normal}
\end{equation}

This normalization stabilizes policy gradients and prevents numerical explosion 
when fine-tuning large diffusion backbones.

\subsection{Reward-Modular Extension to Structured Tasks}
\label{subsec:generalization}

While the geometric reward $r_{\mathrm{geo}}$ in Section~\ref{subsec:rewards} 
is task-agnostic, structured tasks such as object detection and segmentation 
require pixel-space rewards that capture spatial, in addition to semantic, 
correctness. The modular design of \mgpo~allows substituting $r_{\mathrm{disc}}$ 
with such task-specific rewards while keeping the policy optimization, 
geometric reward, and overall training pipeline unchanged. We illustrate this with two examples; each task still requires a 
\emph{task-appropriate} reward design, but the burden is reduced from 
modifying the framework to specifying a scalar reward signal.

\noindent \textbf{Object Detection.} To encourage clear spatial boundaries 
without composite losses, we derive the reward from a pretrained detector. 
For an image $\tilde{x}_0$ conditioned on class $c$, we apply Non-Maximum 
Suppression to filter redundant predictions, and define the reward as the 
maximum remaining confidence score for class $c$:
\begin{equation}
    r_{\mathrm{det}}(\tilde{x}_0, c) = \max\bigl(\mathcal{S}_c(\tilde{x}_0)\bigr),
\label{eq: r_det}
\end{equation}
where $\mathcal{S}_c(\tilde{x}_0)$ denotes the set of post-NMS confidence scores.

\noindent \textbf{Image Segmentation.} Instead of relying on predefined mask 
annotations, we extract the spatial probability map from a pre-trained 
segmentation network. To prevent reward hacking via isolated high-confidence 
pixels, we average the top-$K$ pixel confidences for class $c$:
\begin{equation}
    r_{\mathrm{seg}}(\tilde{x}_0, c) = \frac{1}{K}\sum_{k=1}^{K}\mathrm{TopK}\bigl(\mathcal{P}_c(\tilde{x}_0)\bigr)_k,
\label{eq: r_seg}
\end{equation}
where $\mathcal{P}_c(\tilde{x}_0)$ is the flattened probability map. The 
hyperparameter $K$ enforces a minimum expected pixel area (analysis in Appendix).

In both cases, the reward depends only on the prompted class $c$. Spatial annotations are obtained by class-conditioned image generation followed by pseudo-annotation with the same frozen task model. For detection, we keep the post-NMS predictions of class $c$ (score threshold 0.05, IoU threshold 0.5) as boxes. For segmentation, we keep the pixels whose argmax prediction is $c$ as the mask; the Top-$K$ operation in Eq.~\eqref{eq: r_seg} only computes the scalar reward. Images without a valid class-$c$ prediction are kept with empty annotations (detection) or an all-background mask (segmentation). The current formulation therefore does not model multi-class co-occurrence or inter-class relations (see Appendix~\ref{sec:limitations_impacts}). This reward-modular design 
allows MGPO to extend to new structured tasks through reward specification 
rather than architectural modification, though designing an effective reward 
for each task remains a non-trivial step (see Appendix~\ref{sec: supply} for further design 
considerations).

\section{Experiments}
\label{sec:exps}

\vspace{-5pt}
\begin{table*}[h]
\centering
\caption{Comparison of SOTA dataset distillation methods on ImageWoof under various IPC and backbone settings. The best results are marked as bold, and the second are underlined.}
\resizebox{\linewidth}{!}{
\begin{tabular}{cc|cccccccccc|c|c}
\toprule 
IPC (Ratio) & Model & Random & K-Center & Herding & DiT & Minimax & DM & IDC-1 & D$^4$M & DDVLCP & ManifoldGD & \mgpo~(Ours) & Full \\
\midrule 
\multirow{3}{*}{10~(0.8\%)} 
& ConvNet-6   & 24.3{\tiny$\pm$1.1} & 19.4{\tiny$\pm$0.9} & 26.7{\tiny$\pm$0.5} & 34.2{\tiny$\pm$1.1} & \underline{37.0{\tiny$\pm$1.0}} & 26.9{\tiny$\pm$1.2} & 33.3{\tiny$\pm$1.1} & 29.4{\tiny$\pm$0.9} & 34.8{\tiny$\pm$2.4} & 36.9{\tiny$\pm$0.6}  & \textbf{37.2{\tiny$\pm$0.8}} & 86.4{\tiny$\pm$0.2} \\
& ResNetAP-10 & 29.4{\tiny$\pm$0.8} & 22.1{\tiny$\pm$0.1} & 32.0{\tiny$\pm$0.3} & 34.7{\tiny$\pm$0.5} & 39.2{\tiny$\pm$1.3} & 30.3{\tiny$\pm$1.2} & 39.1{\tiny$\pm$0.5} & 33.2{\tiny$\pm$2.1} & \underline{39.5{\tiny$\pm$1.5}} & 38.3{\tiny$\pm$0.4}  & \textbf{40.9{\tiny$\pm$1.1}} & 87.5{\tiny$\pm$0.5} \\
& ResNet-18   & 27.7{\tiny$\pm$0.9} & 21.1{\tiny$\pm$0.4} & 30.2{\tiny$\pm$1.2} & 34.7{\tiny$\pm$0.4} & 37.6{\tiny$\pm$0.9} & 33.4{\tiny$\pm$0.7} & 37.3{\tiny$\pm$0.2} & 32.3{\tiny$\pm$1.2} & \underline{39.9{\tiny$\pm$2.6}} & 39.2{\tiny$\pm$0.7}  & \textbf{40.1{\tiny$\pm$0.3}} & 89.3{\tiny$\pm$1.2} \\
\midrule
\multirow{3}{*}{20~(1.6\%)} 
& ConvNet-6   & 29.1{\tiny$\pm$0.7} & 21.5{\tiny$\pm$0.8} & 29.5{\tiny$\pm$0.3} & 36.1{\tiny$\pm$0.8} & 37.6{\tiny$\pm$0.2} & 29.9{\tiny$\pm$1.0} & 35.5{\tiny$\pm$0.8} & 34.0{\tiny$\pm$2.3} & \underline{37.9{\tiny$\pm$1.9}} & 37.7{\tiny$\pm$0.6} & \textbf{39.6{\tiny$\pm$1.3}} & 86.4{\tiny$\pm$0.2} \\
& ResNetAP-10 & 32.7{\tiny$\pm$0.4} & 25.1{\tiny$\pm$0.7} & 34.9{\tiny$\pm$0.1} & 41.1{\tiny$\pm$0.8} & \underline{45.8{\tiny$\pm$0.5}} & 35.2{\tiny$\pm$0.6} & 43.4{\tiny$\pm$0.3} & 40.1{\tiny$\pm$1.6} & 44.5{\tiny$\pm$2.2}  & 45.6{\tiny$\pm$0.5} & \textbf{46.1{\tiny$\pm$0.3}} & 87.5{\tiny$\pm$0.5} \\
& ResNet-18   & 29.7{\tiny$\pm$0.5} & 23.6{\tiny$\pm$0.3} & 32.2{\tiny$\pm$0.6} & 40.5{\tiny$\pm$0.5} & 42.5{\tiny$\pm$0.6} & 29.8{\tiny$\pm$1.7} & 38.6{\tiny$\pm$0.2} & 38.4{\tiny$\pm$1.1} & \underline{44.5{\tiny$\pm$2.0}}  & 42.4{\tiny$\pm$0.4} & \textbf{45.9{\tiny$\pm$0.2}} & 89.3{\tiny$\pm$1.2} \\
\midrule
\multirow{3}{*}{50~(3.8\%)} 
& ConvNet-6   & 41.3{\tiny$\pm$0.6} & 36.5{\tiny$\pm$1.0} & 40.3{\tiny$\pm$0.7} & 46.5{\tiny$\pm$0.8} & 53.9{\tiny$\pm$0.6} & 44.4{\tiny$\pm$1.0} & 43.9{\tiny$\pm$1.2} & 47.4{\tiny$\pm$0.9} & \underline{54.5{\tiny$\pm$0.6}} & 51.3{\tiny$\pm$0.2} & \textbf{55.7{\tiny$\pm$0.4}} & 86.4{\tiny$\pm$0.2} \\
& ResNetAP-10 & 47.2{\tiny$\pm$1.3} & 40.6{\tiny$\pm$0.4} & 49.1{\tiny$\pm$0.7} & 49.3{\tiny$\pm$0.2} & 56.3{\tiny$\pm$1.0} & 47.1{\tiny$\pm$1.1} & 48.3{\tiny$\pm$1.0} & 51.7{\tiny$\pm$3.2} & 57.3{\tiny$\pm$0.5}  & \underline{57.9{\tiny$\pm$0.6}} & \textbf{58.1{\tiny$\pm$0.2}} & 87.5{\tiny$\pm$0.5} \\
& ResNet-18   & 47.9{\tiny$\pm$1.8} & 39.6{\tiny$\pm$1.0} & 48.3{\tiny$\pm$1.2} & 50.1{\tiny$\pm$0.5} & 57.1{\tiny$\pm$0.6} & 46.2{\tiny$\pm$0.6} & 48.3{\tiny$\pm$0.8} & 53.7{\tiny$\pm$2.2} & \underline{58.9{\tiny$\pm$1.5}} & 58.2{\tiny$\pm$0.6} & \textbf{59.7{\tiny$\pm$0.4}} & 89.3{\tiny$\pm$1.2} \\
\midrule
\multirow{3}{*}{70~(5.4\%)} 
& ConvNet-6   & 46.3{\tiny$\pm$0.6} & 38.6{\tiny$\pm$0.7} & 46.2{\tiny$\pm$0.6} & 50.1{\tiny$\pm$1.2} & 55.7{\tiny$\pm$0.9} & 47.5{\tiny$\pm$0.8} & 48.9{\tiny$\pm$0.7} & 50.5{\tiny$\pm$0.4} & \underline{55.8{\tiny$\pm$1.7}} & 55.3{\tiny$\pm$0.6} & \textbf{56.4{\tiny$\pm$1.3}} & 86.4{\tiny$\pm$0.2} \\
& ResNetAP-10 & 50.8{\tiny$\pm$0.6} & 45.9{\tiny$\pm$1.5} & 53.4{\tiny$\pm$1.4} & 54.3{\tiny$\pm$0.9} & 58.3{\tiny$\pm$0.2} & 51.7{\tiny$\pm$0.8} & 52.8{\tiny$\pm$1.8} & 54.7{\tiny$\pm$1.6} & \underline{60.6{\tiny$\pm$0.3}} & 60.6{\tiny$\pm$0.1} & \textbf{60.8{\tiny$\pm$0.4}} & 87.5{\tiny$\pm$0.5} \\
& ResNet-18   & 52.1{\tiny$\pm$1.0} & 44.6{\tiny$\pm$1.1} & 49.7{\tiny$\pm$0.8} & 51.5{\tiny$\pm$1.0} & 58.8{\tiny$\pm$0.7} & 51.9{\tiny$\pm$0.8} & 51.1{\tiny$\pm$1.7} & 56.3{\tiny$\pm$1.8} & 60.3{\tiny$\pm$0.3} & \underline{61.0{\tiny$\pm$0.6}} & \textbf{61.2{\tiny$\pm$0.9}} & 89.3{\tiny$\pm$1.2} \\
\midrule
\multirow{3}{*}{100~(7.7\%)} 
& ConvNet-6   & 52.2{\tiny$\pm$0.4} & 45.1{\tiny$\pm$0.5} & 54.4{\tiny$\pm$1.1} & 53.4{\tiny$\pm$0.3} & 61.1{\tiny$\pm$0.7} & 55.0{\tiny$\pm$1.3} & 53.2{\tiny$\pm$0.9} & 57.9{\tiny$\pm$1.5} & \textbf{62.7{\tiny$\pm$1.4}} & 60.6{\tiny$\pm$0.2} & \underline{61.3{\tiny$\pm$1.1}} & 86.4{\tiny$\pm$0.2} \\
& ResNetAP-10 & 59.4{\tiny$\pm$1.0} & 54.8{\tiny$\pm$0.2} & 61.7{\tiny$\pm$0.9} & 58.3{\tiny$\pm$0.8} & 64.5{\tiny$\pm$0.2} & 56.4{\tiny$\pm$0.8} & 56.1{\tiny$\pm$0.9} & 59.5{\tiny$\pm$1.8} & \underline{65.7{\tiny$\pm$0.5}} & 65.2{\tiny$\pm$0.3}  & \textbf{65.8{\tiny$\pm$0.6}} & 87.5{\tiny$\pm$0.5} \\
& ResNet-18   & 61.5{\tiny$\pm$1.3} & 50.4{\tiny$\pm$0.4} & 59.3{\tiny$\pm$0.7} & 58.9{\tiny$\pm$1.3} & 65.7{\tiny$\pm$0.4} & 60.2{\tiny$\pm$1.0} & 58.3{\tiny$\pm$1.2} & 63.8{\tiny$\pm$1.3} & \underline{68.3{\tiny$\pm$0.4}} & 66.3{\tiny$\pm$0.6}  & \textbf{68.4{\tiny$\pm$0.2}} & 89.3{\tiny$\pm$1.2} \\
\bottomrule
\end{tabular}}
\label{tab:imagewoof}
\end{table*}
\vspace{-5pt}


\subsection{Experimental Setup}
\label{subsec:expset}

\noindent\textbf{Generative Backbone.}
For classification experiments, we adopt a pretrained DiT-XL/2~\cite{DiT} as the generative backbone and fine-tune it using DiffFit~\cite{difffit},
all synthetic images are generated at a resolution of $256\times256$.
For detection and segmentation, we follow the experimental protocol of UniDD~\cite{unidd} and employ Stable Diffusion v1.4~\cite{sd14} as the generative backbone to synthesize labeled images with structured annotations.

\noindent\textbf{Implementation Details.}
For classification, student networks are trained from scratch for 300 epochs using SGD with cosine learning-rate decay.
For downstream tasks, we use Faster R-CNN~\cite{ren2016fasterrcnnrealtimeobject} with a ResNet-50 backbone for object detection
and LR-ASPP with a MobileNetV3~\cite{howard2019searchingmobilenetv3} backbone for semantic segmentation, with standard pretrained backbone weights.
All experiments are conducted on a single NVIDIA A100 GPU (80GB).
To ensure statistical reliability, each synthetic dataset is generated three times and each downstream model is trained with three random seeds;
we report mean $\pm$ standard deviation. More details see Appendix~\ref{sec: supply}.

\noindent\textbf{Baselines.}
For classification, we compare against representative optimization-based and diffusion-based dataset distillation methods,
including DM~\cite{DM}, IDC~\cite{IDC}, SRe$^2$L~\cite{Sre2L}, RDED~\cite{RDED}, Minimax~\cite{minimax},
D$^4$M~\cite{D4M},  DDVLCP~\cite{ddvlcp}, CaO2~\cite{cao2}, and ManifoldGD~\cite{roy2026manifoldgd}.
For segmentation and object detection, we primarily compare with UniDD~\cite{unidd},
a recent universal dataset distillation framework. Since diffusion-based distillation for structured outputs remains relatively underexplored,
we additionally include coreset selection baselines (Random, Uniform, K-Center~\cite{k-center}, and Herding~\cite{herding}).

\noindent\textbf{Evaluation Protocol.}
For classification, we evaluate ConvNet-6, ResNetAP-10, and ResNet-18 trained from scratch.
For segmentation and object detection, we follow standard training recipes with pretrained backbones as described above.
Following prior works, we report the better performance between hard-label and soft-label supervision.

\subsection{Results on Classification}
\label{subsec:cls}

We first evaluate our approach on classification benchmarks,
providing comprehensive comparisons with multiple state-of-the-art dataset distillation methods across four benchmark datasets: ImageWoof~\cite{Howard_Imagewoof_2019}, ImageNette~\cite{Howard_Imagenette_2019}, ImageIDC~\cite{IDC} and ImageNet-1K~\cite{imagenet15russakovsky}. The results are summarized in Tables~\ref{tab:imagewoof}, \ref{tab:nette_idc} and \ref{tab:imagenet1k}. Additional comparison experiment results with CaO2~\cite{cao2} are provided in Appendix~\ref{sec: addexp}, Table~\ref{backbone}. 
\begin{table}[H]
\centering
\caption{Comparison on ImageNette and ImageIDC under different IPC settings. Results are obtained on ResNet-18. The best results are marked as bold and the second are underlined.}

\small
\setlength{\tabcolsep}{4pt}
\renewcommand{\arraystretch}{1.0}

\begin{tabular}{c c c c c c c|c}
\toprule
\textbf{} & IPC & Random & DiT & Minimax & D$^4$M & DDVLCP & \mgpo~(Ours) \\
\midrule

\multirow{3}{*}{Nette}
& 10 & 54.2{\tiny$\pm$1.6} & 59.1{\tiny$\pm$0.7} & 58.6{\tiny$\pm$0.4} & 60.9{\tiny$\pm$1.7} & \underline{64.8{\tiny$\pm$3.6}} & \textbf{66.9{\tiny$\pm$2.2}} \\
& 20 & 63.5{\tiny$\pm$0.5} & 64.8{\tiny$\pm$1.2} & 70.6{\tiny$\pm$0.3} & 66.3{\tiny$\pm$1.3} & \underline{71.4{\tiny$\pm$0.5}} & \textbf{73.9{\tiny$\pm$0.7}} \\
& 50 & 76.1{\tiny$\pm$1.1} & 73.3{\tiny$\pm$0.9} & \underline{83.7{\tiny$\pm$0.4}} & 77.7{\tiny$\pm$1.1} & 81.2{\tiny$\pm$0.8} & \textbf{85.4{\tiny$\pm$1.1}} \\

\midrule

\multirow{3}{*}{IDC}
& 10 & 48.1{\tiny$\pm$0.8} & 54.1{\tiny$\pm$0.4} & 51.9{\tiny$\pm$1.4} & 52.8{\tiny$\pm$0.5} & \textbf{57.0{\tiny$\pm$1.4}} & \underline{54.6{\tiny$\pm$1.7}} \\
& 20 & 52.5{\tiny$\pm$0.9} & 58.9{\tiny$\pm$0.2} & 59.1{\tiny$\pm$3.7} & 58.5{\tiny$\pm$0.4} & \underline{63.3{\tiny$\pm$1.2}} & \textbf{64.8{\tiny$\pm$0.4}} \\
& 50 & 68.1{\tiny$\pm$0.7} & 64.3{\tiny$\pm$0.6} & 69.4{\tiny$\pm$1.4} & 69.1{\tiny$\pm$0.8} & \underline{71.9{\tiny$\pm$0.4}} & \textbf{77.8{\tiny$\pm$0.9}} \\

\bottomrule
\end{tabular}

\label{tab:nette_idc}
\end{table}
\vspace{-20pt}
\begin{table}[H]
\centering
\caption{Comparison of dataset distillation methods on ImageNet-1K under different IPC settings.}

\small
\setlength{\tabcolsep}{4pt}
\renewcommand{\arraystretch}{1.0}

\begin{tabular}{ccccc|c}
\toprule
IPC & SRe$^2$L & RDED & DiT & Minimax & \mgpo~(Ours) \\
\midrule
10 & 21.3{\tiny$\pm$0.6} & 42.0{\tiny$\pm$0.1} & 39.6{\tiny$\pm$0.4} & 44.3{\tiny$\pm$0.5} & \textbf{45.1{\tiny$\pm$0.6}} \\
50 & 46.8{\tiny$\pm$0.2} & 56.5{\tiny$\pm$0.1} & 52.9{\tiny$\pm$0.6} & 58.6{\tiny$\pm$0.3} & \textbf{59.1{\tiny$\pm$0.4}} \\
\bottomrule
\end{tabular}
\label{tab:imagenet1k}
\end{table}
\vspace{-10pt}

\subsection{Result on Detection and Segmentation}
\label{subsec:dense}

We further investigate whether the distilled datasets preserve spatial structures that generalize to downstream tasks, particularly semantic segmentation and object detection.

\noindent\textbf{Object Detection.}
Table~\ref{tab:detection_detailed_std} reports object detection results on Pascal VOC 2012~\cite{pascal-voc-2012} and MS-COCO~\cite{lin2015microsoftcococommonobjects}.
Under aggressive compression, our method substantially outperforms coreset selection baselines,
which indicates that the distilled images retain object-level localization cues rather than merely category-level appearance statistics.

\noindent\textbf{Segmentation.}
Table~\ref{tab:seclect_ablation} summarizes segmentation performance on Pascal VOC~\cite{pascal-voc-2012}.
Our method achieves consistently higher mIoU across compression ratios,
suggesting that pixel-level structural information is preserved during distillation and remains useful for downstream learning.
\begin{table*}[h]
\centering
\caption{Performance comparison with coreset selection methods. Metrics are reported as mAP and AP$_{50}$(\%) across different selection ratios.}
\label{tab:detection_detailed_std}

\resizebox{\linewidth}{!}{
\begin{tabular}{l cc cc cc cc cc cc}
\toprule
\multirow{3}{*}{Method} & \multicolumn{6}{c}{Pascal VOC} & \multicolumn{6}{c}{MS COCO} \\
\cmidrule(lr){2-7} \cmidrule(lr){8-13}
& \multicolumn{2}{c}{0.5\%} & \multicolumn{2}{c}{1\%} & \multicolumn{2}{c}{2\%} & \multicolumn{2}{c}{0.25\%} & \multicolumn{2}{c}{0.5\%} & \multicolumn{2}{c}{1\%} \\
\cmidrule(lr){2-3} \cmidrule(lr){4-5} \cmidrule(lr){6-7} \cmidrule(lr){8-9} \cmidrule(lr){10-11} \cmidrule(lr){12-13}
& mAP & AP$_{50}$ & mAP & AP$_{50}$ & mAP & AP$_{50}$ & mAP & AP$_{50}$ & mAP & AP$_{50}$ & mAP & AP$_{50}$ \\
\midrule
Random   & 0.8{\tiny $\pm$0.2} & 3.1{\tiny $\pm$0.4} & 4.2{\tiny $\pm$0.5} & 13.7{\tiny $\pm$0.6} & 12.4{\tiny $\pm$0.4} & 34.3{\tiny $\pm$0.5} & 0.5{\tiny $\pm$0.1} & 1.7{\tiny $\pm$0.3} & 3.7{\tiny $\pm$0.2} & 10.1{\tiny $\pm$0.3} & 7.2{\tiny $\pm$0.8} & 17.3{\tiny $\pm$0.9} \\
Uniform  & 0.9{\tiny $\pm$0.1} & 3.4{\tiny $\pm$0.3} & 5.7{\tiny $\pm$0.2} & 17.7{\tiny $\pm$0.4} & 13.8{\tiny $\pm$0.3} & 36.2{\tiny $\pm$0.4} & 0.8{\tiny $\pm$0.2} & 2.4{\tiny $\pm$0.5} & 3.4{\tiny $\pm$0.4} & 9.5{\tiny $\pm$0.6}  & 7.4{\tiny $\pm$0.5} & 17.6{\tiny $\pm$0.5} \\
K-Center & 0.5{\tiny $\pm$0.1} & 2.1{\tiny $\pm$0.3} & 3.6{\tiny $\pm$0.6} & 12.3{\tiny $\pm$0.3} & 10.9{\tiny $\pm$0.6} & 29.3{\tiny $\pm$0.6} & 0.4{\tiny $\pm$0.1} & 1.5{\tiny $\pm$0.2} & 3.2{\tiny $\pm$0.5} & 9.5{\tiny $\pm$0.5}  & 6.1{\tiny $\pm$0.3} & 15.4{\tiny $\pm$0.6} \\
Herding  & 0.6{\tiny $\pm$0.2} & 2.4{\tiny $\pm$0.2} & 3.5{\tiny $\pm$0.5} & 11.9{\tiny $\pm$0.5} & 10.4{\tiny $\pm$0.4} & 28.7{\tiny $\pm$0.7} & 0.5{\tiny $\pm$0.1} & 1.8{\tiny $\pm$0.4} & 3.5{\tiny $\pm$0.3} & 9.7{\tiny $\pm$0.3}  & 6.7{\tiny $\pm$0.4} & 16.3{\tiny $\pm$0.7} \\
UniDD    & 8.5{\tiny $\pm$0.4} & 22.3{\tiny $\pm$0.6} & 16.8{\tiny $\pm$0.5} & 38.9{\tiny $\pm$0.7} & 23.9{\tiny $\pm$0.5} & 48.5{\tiny $\pm$0.6} & 4.5{\tiny $\pm$0.3} & 10.3{\tiny $\pm$0.4} & 7.1{\tiny $\pm$0.4} & 16.9{\tiny $\pm$0.3} & 10.8{\tiny $\pm$0.4} & 22.5{\tiny $\pm$0.5} \\
\rowcolor{tabhighlight}
\textbf{\mgpo (Ours)} & \textbf{13.4}{\tiny $\pm$0.2} & \textbf{26.3}{\tiny $\pm$0.5} & \textbf{21.1}{\tiny $\pm$0.4} & \textbf{41.8}{\tiny $\pm$0.3} & \textbf{26.9}{\tiny $\pm$0.9} & \textbf{53.1}{\tiny $\pm$1.3} & \textbf{5.8}{\tiny $\pm$0.2} & \textbf{13.9}{\tiny $\pm$0.3} & \textbf{8.3}{\tiny $\pm$0.5} & \textbf{19.9}{\tiny $\pm$0.2} & \textbf{11.7}{\tiny $\pm$0.8} & \textbf{25.4}{\tiny $\pm$0.8} \\
\midrule
Full Data & \multicolumn{6}{c}{51.4$\pm$0.8 / 80.3$\pm$0.4} & \multicolumn{6}{c}{32.6$\pm$0.7 / 51.4$\pm$0.8} \\
\bottomrule
\end{tabular}
}
\end{table*}
\vspace{-20pt}
\begin{table}[H]
\centering

\begin{minipage}[c]{0.48\linewidth}
\centering
\caption{Semantic Segmentation results.}
\label{tab:seclect_ablation}
\small
\setlength{\tabcolsep}{3pt}
\renewcommand{\arraystretch}{1.0}

\resizebox{\linewidth}{!}{
\begin{tabular}{c|cccc}
\toprule
\multirow{2}{*}{Methods} & \multicolumn{4}{c}{Semantic Segmentation (mIoU)} \\
\cmidrule(lr){2-5}
 & 3.5\% & 7\% &  14\% & Full \\
\midrule
Random   & 6.7$\pm$0.4  & 9.6$\pm$0.5  & 11.3$\pm$0.4 &  \\
Uniform  & 6.5$\pm$0.6  & 9.3$\pm$0.4  & 12.1$\pm$0.5 &  \\
K-Center & 6.1$\pm$0.4  & 8.5$\pm$0.6  & 10.4$\pm$0.7 & \multirow{2}{*}{32.9$\pm$0.6} \\
Herding  & 6.2$\pm$0.7  & 8.7$\pm$0.4  & 10.6$\pm$0.3 &  \\
UniDD & 10.3$\pm$0.5 & 12.3$\pm$0.3 & 14.9$\pm$0.7 &  \\
\rowcolor{tabhighlight}
\mgpo~(Ours) & \textbf{18.3$\pm$0.5} & \textbf{20.4$\pm$0.4} & \textbf{23.1$\pm$0.6} \\

\bottomrule
\end{tabular}
}
\end{minipage}
\hfill
\begin{minipage}[c]{0.48\linewidth}
\centering
\caption{Ablation study of rewards.}
\small
\setlength{\tabcolsep}{3pt}
\renewcommand{\arraystretch}{1.1}

\resizebox{\linewidth}{!}{
\begin{tabular}{l|cc|cc}
\toprule
\multirow{2}{*}{Method} & 
\multicolumn{2}{c|}{ImageWoof} & 
\multicolumn{2}{c}{ImageNette} \\
\cmidrule(lr){2-3} \cmidrule(lr){4-5}
 & IPC = 10 & IPC = 50 & IPC = 10 & IPC = 50 \\
\midrule
DiT         & 34.7$\pm$0.5       & 49.3$\pm$0.2       & 58.9$\pm$1.2  & 82.9$\pm$0.1 \\
Pixel Reward        & 37.0 $\pm$0.3     & 50.5$\pm$0.5      & 64.1$\pm$0.9   & 84.9$\pm$1.1 \\
Latent Reward      & 36.8$\pm$0.3   & 50.0$\pm$0.2  & 65.9$\pm$0.2 & 84.9$\pm$1.3 \\
\midrule
\rowcolor{tabhighlight}
Both (Ours)      & \textbf{40.9$\pm$1.1}      & \textbf{57.9$\pm$0.2}      & \textbf{66.9$\pm$2.2}  & \textbf{85.4$\pm$1.1} \\
\bottomrule
\end{tabular}}
\label{abl2}
\end{minipage}
\vspace{-2.0mm}
\end{table}

\subsection{Ablation Study}
\label{sec:abl}
We conduct ablation studies to understand how each reward component contributes to both semantic fidelity and spatial consistency.
Specifically, we progressively enable the baseline DiT objective, Pixel Reward, MST Reward, and their combination,
with results reported in Table~\ref{abl2}. In general, when both rewards are applied, best performance is observed across different IPC settings.

We further analyze robustness across training hyperparameters 
(Figure~\ref{fig:ablation_2x2}) and finetuning design choices 
(Figure~\ref{fig:finetune_param}).
\mgpo~remains stable across all settings, yielding the best trade-off.

\begin{figure}[H]
\captionsetup{skip=3pt}
\centering
\begin{minipage}{0.499\linewidth}
\centering
\includegraphics[width=\linewidth, height=25cm, keepaspectratio]{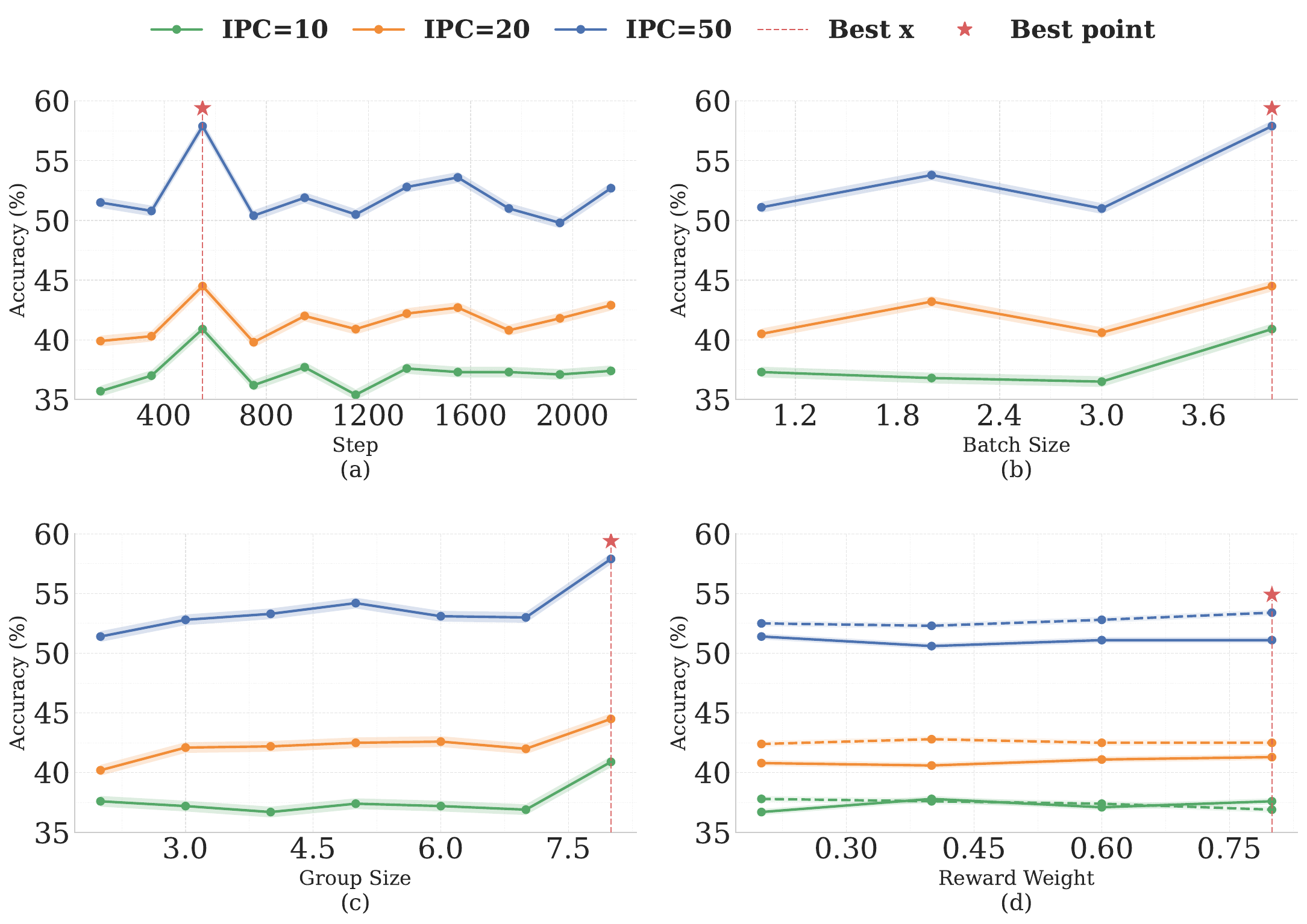}
\caption{Hyperparameter analysis of \mgpo.}
\label{fig:ablation_2x2}
\end{minipage}
\hfill
\begin{minipage}{0.495\linewidth}
\centering
\includegraphics[width=\linewidth, height=25cm, keepaspectratio]{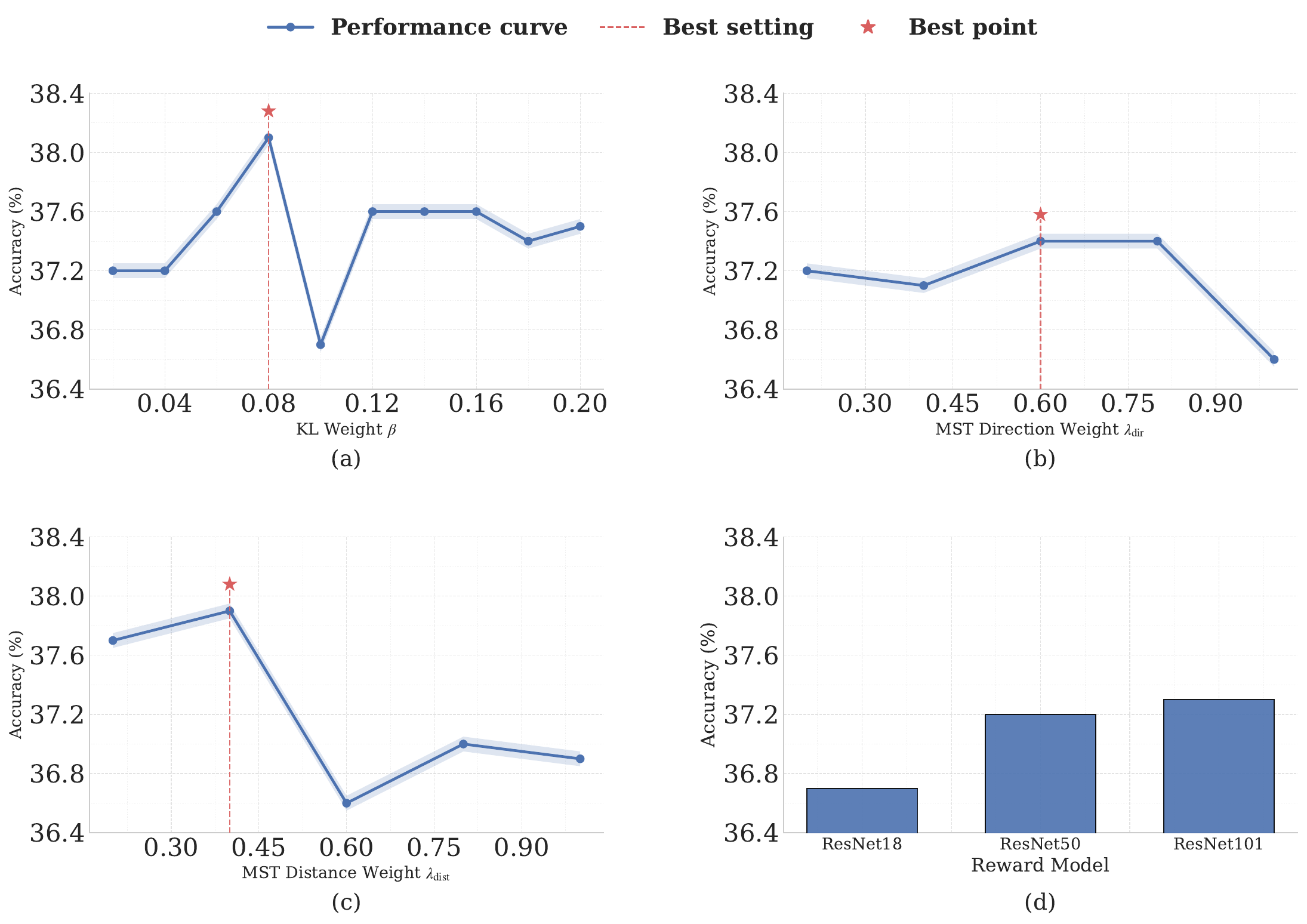}
\caption{Ablation studies on key design choices of \mgpo~ finetuning process.}
\label{fig:finetune_param}
\end{minipage}
\end{figure}
\vspace{-10pt}

\section{Conclusion}
\label{sec:conclusion}

In this paper, we proposed~\mgpo~, a reward driven RL framework that resolves the fundamental objective mismatch in diffusion-based dataset distillation. By integrating pixel level discriminative and latent level geometric rewards,~\mgpo~effectively aligns the generative process with downstream tasks. Furthermore, our plug-and-play reward design enables the framework
to extend beyond standard classification to complex downstream tasks like object detection and semantic segmentation. Extensive experiments show that~\mgpo~achieves state of the art performance across multiple classification benchmarks and successfully reaches state of the art on detection and segmentation tasks.

\subsection*{AI Use Statement}

In this work, we used generative AI tools (large language models) for the
following tasks with required disclosure: \emph{proof assistance}, namely
helping to derive and write the proofs of the theoretical results in
Appendix~\ref{proof:mstasms} and Appendix~\ref{proof:subsample-mst}
(Lemmas~\ref{lem:one-sided}--\ref{lem:covering-rate},
Theorems~\ref{thm:mst} and~\ref{thm:subsample-mst},
Proposition~\ref{prop:node-deployed}, and the associated corollaries and remarks),
and \emph{drafting} parts of the manuscript, primarily the related-work
discussion (Section~2 and Appendix~\ref{sec: related}) and
the implementation descriptions in Appendix~\ref{sec: supply}.
We have not used generative AI tools for research ideation, hypothesis
refinement, methodology design, implementation of the method, running
experiments, data cleaning, or result interpretation; the remaining required
disclosure tasks (e.g., translation and qualitative analysis) are not applicable
to this work. Note that the diffusion models used to synthesize distilled
datasets are the object of study and are not AI assistance in preparing this
paper. Additionally, we used generative AI tools to aid and polish the writing
(grammar, wording, and clarity) throughout the paper.

We have reviewed all AI-assisted work. Every AI-assisted proof was checked
step by step by at least two authors, who verified all assumptions,
constants, and cited results against the original sources
(e.g., \citet{berenfeld2022estimating}, \citet{niyogi2008finding}).
All AI-drafted or AI-polished text was revised by the authors, and every
citation was checked manually against the referenced paper. All numerical
results were produced by the authors' own code and were not generated or
edited by AI tools. We take responsibility for the final content of this work,
including text, claims, or artifacts produced with the aid of generative AI.

\subsection*{Ethics Statement}

This work does not involve human subjects, user studies, or the collection of
new data. All experiments use publicly available benchmarks (ImageNet-1K and
its subsets ImageWoof, ImageNette, and ImageIDC; PASCAL VOC 2012; MS-COCO),
used under their respective licenses and terms of use; we do not redistribute
the original images. Some of these datasets contain images of people collected
from the web, and the synthesized distilled datasets may therefore reflect
content and biases present in the source data. In addition, because \mgpo{}
optimizes the generator with frozen pretrained reward models (classifier,
detector, or segmentation network), any biases in these reward models may be
internalized and amplified in the distilled data; we discuss this risk and
recommend auditing reward models before deployment in
Appendix~\ref{sec:limitations_impacts}. We do not foresee direct harmful
applications beyond those common to generative models and dataset
compression. The authors declare no conflicts of interest or sponsorship that
could have influenced this work.

\subsection*{Reproducibility Statement}

We have made the following efforts to ensure reproducibility. The full
method is described in Section~\ref{sec:method}, including the DDIM
sampling formulation (Section~\ref{subsec: DDIM}), the reward definitions
(Section~\ref{subsec:rewards}), and the complete training procedure
(Algorithm~\ref{alg:mgpo}). The MST construction is given in
Algorithm~\ref{alg:mst}, with all preprocessing steps and hyperparameters
(e.g., $k$-NN size, PCA dimension, number of sampled latents per class)
detailed in Appendix~\ref{sec:supp_mst_details}; the latent geometric reward
and the pixel-space rewards for classification, detection, and segmentation
are specified in Appendices~\ref{sec:supp_latent_reward}
and~\ref{sec:supp_pixel_reward} (Algorithms~\ref{alg:det_reward}
and~\ref{alg:seg_reward}). Experimental settings, including backbones,
student architectures, training schedules, evaluation protocols, and the
number of runs and random seeds, are given in Section~\ref{subsec:expset},
and hyperparameter sensitivity analyses are reported in
Section~\ref{sec:abl} and Appendix~\ref{sec: addexp}. For the theoretical
results, all assumptions are stated explicitly and complete proofs are
provided in Appendices~\ref{sec:makchain}, \ref{proof:mstasms},
and~\ref{proof:subsample-mst}. All datasets are publicly available.
The source code will be released upon acceptance.

\bibliography{nips}
\bibliographystyle{iclr2027_conference}

\newpage
\begin{center}
{\LARGE\sc Appendix\par}
\end{center}
\vskip 0.3in
\appendix

\startcontents[appendix]
\printcontents[appendix]{}{0}{\section*{Contents of Appendix}\setcounter{tocdepth}{2}}
\vspace{1em}

\section{Theoretical Analysis}
\subsection{Reward-Aligned Diffusion as Policy Optimization}
\label{sec:makchain}
Let \(p_\theta(x_0)\) denote the data distribution induced by a the current diffusion model, and let \(p_{\mathrm{ref}}(x_0)\) denote the distribution of the frozen pretrained model
Standard diffusion training maximizes likelihood and therefore prioritizes perceptual fidelity.
However, in dataset distillation, our objective is not merely realism,
but the generation of samples that are informative for downstream tasks.

To incorporate task objectives, we consider a reward-reweighted target distribution
\begin{equation}
p_\beta(x_0)
\;\propto\;
p_\mathrm{ref}(x_0)\exp\!\big(\beta R(x_0)\big),
\label{eq:reward_reweighted_density_new}
\end{equation}
where \(R(x_0)\) denotes a task-specific reward and \(\beta>0\) controls alignment strength.
This formulation shifts probability mass toward task-relevant regions of the data manifold.

We view diffusion sampling process as a trajectory
\(
\tau = \{x_T, x_{T-1}, \dots, x_0\}
\)
induced by a parameterized policy \(p_\theta(\tau)\).

Under this perspective, aligning diffusion generation with task objectives
amounts to optimizing the expected terminal reward over trajectories, regularized towards the pretrained model:
\begin{equation}
\mathcal{J}_\beta(\theta)
=
\mathbb{E}_{\tau \sim p_\theta}
\big[
R(x_0)
\big]
\;-\;\tfrac{1}{\beta}\,\mathrm{KL}\big(p_\theta(x_0)\,\|\,p_{\mathrm{ref}}(x_0)\big).
\label{eq:trajectory_objective}
\end{equation}

Over all distributions of $x_0$, the maximizer of Eq.~(\ref{eq:trajectory_objective}) is exactly the reward-reweighted density in Eq.~(\ref{eq:reward_reweighted_density_new}), since $\mathcal{J}_\beta = \frac{1}{\beta}\log Z_\beta - \frac{1}{\beta}\mathrm{KL}(p_\theta \,\|\, p_\beta)$ with $Z_\beta = \mathbb{E}_{p_{\mathrm{ref}}}[\exp(\beta R)]$. Policy optimization over the diffusion parameters therefore approximates reward-aligned sampling within the model class, without modifying the inference procedure.

This policy view naturally enables the incorporation of
heterogeneous reward signals.
In our setting, the reward function will consist of
(i) a discriminative term defined in pixel space
and (ii) a geometric term defined in latent space,
allowing alignment of both semantic and structural properties.

\subsection{Theoretical Justification of MST as a Manifold Skeleton}
\label{proof:mstasms}

\paragraph{Setup.}
Let $\manifold \subset \R^d$ be a compact, connected, $C^2$-embedded submanifold 
with reach
\begin{equation*}
    \tau \defeq \reach(\manifold) > 0.
\end{equation*}
Let $U = \{u_1, \ldots, u_N\} \subset \manifold$ be a finite point set on the 
manifold, and define its \emph{covering radius}
\begin{equation}
    \delta \defeq \sup_{x \in \manifold}\, \min_{1 \leq i \leq N} \|x - u_i\|_2.
    \label{eq:def-covering-radius}
\end{equation}
Let $\mathcal{T}_{\mathrm{MST}} = (U, E_{\mathrm{MST}})$ denote the Euclidean 
minimum spanning tree on $U$, and write
\begin{equation*}
    |\mathcal{T}_{\mathrm{MST}}| \defeq 
    \bigcup_{(u_i, u_j) \in E_{\mathrm{MST}}} [u_i, u_j] \;\subset\; \R^d
\end{equation*}
for its geometric realization, where $[u_i, u_j]$ denotes the closed line 
segment in $\R^d$ between $u_i$ and $u_j$.

\textbf{Scope.} The analysis in this subsection and in Appendix~\ref{proof:subsample-mst} is idealized: it concerns a Euclidean MST built on points sampled from a smooth class manifold, with the distance to the tree realization $|\mathcal{T}_{\mathrm{MST}}|$ (i.e., to the nearest edge segment). It motivates the geometric reward but does not by itself cover the implemented reward, which uses the nearest-node distance in the VAE/PCA representation, a $k$NN-restricted MST, and a direction term. \cref{prop:node-deployed} below states what carries over to the implementation.

The goal of this subsection is to establish that, under the regime 
$\delta < \tau$, the MST realization $|\mathcal{T}_{\mathrm{MST}}|$ approximates 
the manifold $\manifold$ in the Hausdorff metric at rate $\delta$ in the 
forward direction and rate $\delta^2/\tau$ in the backward direction. 
The argument proceeds via four lemmas: 
(\textit{i}) a one-sided Hausdorff bound from the manifold to any spanning tree 
(\cref{lem:one-sided}); 
(\textit{ii}) the existence of a short-edged spanning tree on $U$ 
(\cref{lem:local-tree}); 
(\textit{iii}) a bottleneck property of the MST 
(\cref{lem:bottleneck}); 
(\textit{iv}) a chord-to-manifold distance bound based on convexity defect 
(\cref{lem:short-chord}). 
These ingredients are then combined in \cref{thm:mst}.

\begin{lemma}[One-sided Hausdorff bound]
\label{lem:one-sided}
For any spanning tree $\mathcal{T}$ on the vertex set $U$,
\begin{equation*}
    \sup_{x \in \manifold}\, d(x, |\mathcal{T}|) \;\leq\; \delta.
\end{equation*}
\end{lemma}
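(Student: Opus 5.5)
The bound follows from one observation: the realization of any spanning tree contains all of its vertices. So for every $x \in \manifold$, the distance from $x$ to $|\mathcal{T}|$ is at most its distance to the nearest sample point, and the covering-radius definition \eqref{eq:def-covering-radius} controls the latter. No property of the tree beyond the spanning property is used. In particular, neither minimality nor the reach enters, and the lemma holds for every spanning tree $\mathcal{T}$ on $U$.

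The steps are as follows.

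First, I will check that $U \subset |\mathcal{T}|$. Since $\mathcal{T}$ spans $U$, every vertex $u_i$ is an endpoint of some edge $(u_i,u_j)$, and $u_i \in [u_i,u_j] \subset |\mathcal{T}|$. The degenerate case $N=1$ has no edges; there I adopt the convention that $|\mathcal{T}| \supseteq U$, so the single vertex is included in the realization.

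Second, I fix $x \in \manifold$. Since $U$ is finite, the minimum $\min_i \|x-u_i\|_2$ is attained at some $u_{i^*}$. By \eqref{eq:def-covering-radius}, this gives $\|x-u_{i^*}\|_2 \le \delta$.

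Third, because $u_{i^*} \in |\mathcal{T}|$, we have $d(x,|\mathcal{T}|) = \inf_{y\in|\mathcal{T}|}\|x-y\|_2 \le \|x-u_{i^*}\|_2 \le \delta$.

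Finally, taking the supremum over $x\in\manifold$ yields the claim.

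There is no genuine obstacle here. The only points needing care are the edge-free case $N=1$, handled by the convention above, and the fact that $d(\cdot,|\mathcal{T}|)$ is an infimum over a compact set (a finite union of segments). The latter means the bound is attained but not needed strictly.

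It is also worth remarking that the inequality is essentially sharp. Points of $\manifold$ at distance exactly $\delta$ from $U$ may still lie closer to an edge interior, but in general no better bound than $\delta$ holds without further structure. This is why the sharper $\delta^2/\tau$ rate in \cref{thm:mst} is claimed only for the backward direction.
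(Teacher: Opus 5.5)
Your proof is correct and follows the paper's argument: the covering-radius definition gives a vertex $u_i$ within $\delta$ of $x$, vertices lie in $|\mathcal{T}|$, so $d(x,|\mathcal{T}|)\le\delta$, and you take the supremum. Your treatment of the edge-free case $N=1$ adds a small refinement the paper omits, since its union-of-segments definition would make $|\mathcal{T}|$ empty there.
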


\begin{proof}
Fix $x \in \manifold$. By the definition of $\delta$ in 
\cref{eq:def-covering-radius}, there exists $u_i \in U$ such that
\begin{equation*}
    \|x - u_i\|_2 \leq \delta.
\end{equation*}
Since $u_i$ is a vertex of $\mathcal{T}$, we have $u_i \in |\mathcal{T}|$. Therefore
\begin{equation*}
    d(x, |\mathcal{T}|) \;\leq\; \|x - u_i\|_2 \;\leq\; \delta.
\end{equation*}
Taking the supremum over $x \in \manifold$ proves the claim.
\end{proof}

\begin{lemma}[Existence of a short-edged spanning tree]
\label{lem:local-tree}
There exists a spanning tree on $U$ whose every edge has Euclidean length at 
most $2\delta$.
\end{lemma}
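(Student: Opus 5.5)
The plan is to show that the graph $G_{2\delta}$ on $U$, with an edge $\{u_i,u_j\}$ whenever $\|u_i-u_j\|_2\le 2\delta$, is connected. A connected graph on the finite vertex set $U$ has a spanning tree, for example by BFS, and every edge of that tree is an edge of $G_{2\delta}$, so it has length at most $2\delta$. Connectivity of $G_{2\delta}$ will come from the connectedness of $\manifold$ together with the covering property \cref{eq:def-covering-radius}. The reach assumption is not needed here.

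To prove connectivity, suppose for contradiction that $G_{2\delta}$ splits into at least two components. Let $A\subsetneq U$ be the vertex set of one of them and $B=U\setminus A\neq\emptyset$. Every pair $a\in A$, $b\in B$ then satisfies $\|a-b\|_2>2\delta$. Define closed sets
\[
\manifold_A \defeq \{x\in\manifold : d(x,A)\le\delta\},\qquad \manifold_B \defeq \{x\in\manifold : d(x,B)\le\delta\}.
\]
They are closed because $d(\cdot,A)$ and $d(\cdot,B)$ are continuous. They are nonempty because $A,B\subset U\subset\manifold$. By the covering property, each $x\in\manifold$ has some $u_i$ with $\|x-u_i\|_2\le\delta$, and $u_i$ lies in $A$ or in $B$, so $\manifold_A\cup\manifold_B=\manifold$. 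Two nonempty closed sets covering a connected space must intersect, so there is some $x\in\manifold_A\cap\manifold_B$.

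For such an $x$, pick $a\in A$ and $b\in B$ with $\|x-a\|_2\le\delta$ and $\|x-b\|_2\le\delta$; these exist because $A$ and $B$ are finite, so the minimum is attained. The triangle inequality gives $\|a-b\|_2\le 2\delta$, which makes $\{a,b\}$ an edge of $G_{2\delta}$ between the two components. This contradicts the choice of $A$.

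The key step is this one: turning connectedness of $\manifold$ into a short edge between the two parts. Using closed sets, and checking that the supremum-minimum in \cref{eq:def-covering-radius} really gives $\|x-u_i\|_2\le\delta$ because $U$ is finite, avoids any $\varepsilon$-slack. If one preferred a non-strict formulation, the same argument with $\delta+\varepsilon$ followed by $\varepsilon\to 0$ also works, since the edge set is finite.
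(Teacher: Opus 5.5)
Your proof is correct and is essentially the paper's argument: your sets $\manifold_A$ and $\manifold_B$ are exactly the unions of the closed caps $\overline{B}(u_i,\delta)\cap\manifold$ over the two vertex groups, and the paper likewise uses connectedness of $\manifold$ to force two such unions to intersect and then the triangle inequality to produce an edge of length at most $2\delta$. The only cosmetic difference is that the paper proves connectivity of the cap-intersection graph, which is a subgraph of your threshold graph $G_{2\delta}$, rather than of $G_{2\delta}$ itself.
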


\begin{proof}
For each $i \in \{1, \ldots, N\}$, define the closed manifold cap
\begin{equation*}
    A_i \defeq \overline{B}(u_i, \delta) \cap \manifold.
\end{equation*}
By the definition of $\delta$, every $x \in \manifold$ lies within distance 
$\delta$ of some $u_i$, hence
\begin{equation*}
    \manifold \;=\; \bigcup_{i=1}^{N} A_i.
\end{equation*}

\emph{Claim: the intersection graph of $\{A_i\}_{i=1}^N$ is connected.} 
Suppose, for contradiction, that the intersection graph is disconnected, so 
there is a partition
\begin{equation*}
    \{1, \ldots, N\} = I \sqcup J, \qquad I, J \neq \varnothing,
\end{equation*}
satisfying
\begin{equation*}
    A_i \cap A_j = \varnothing \qquad \text{for all } i \in I,\ j \in J.
\end{equation*}
Define
\begin{equation*}
    X \defeq \bigcup_{i \in I} A_i, \qquad Y \defeq \bigcup_{j \in J} A_j.
\end{equation*}
Each $A_i$ is closed in $\manifold$ (as the intersection of a closed Euclidean 
ball with $\manifold$), so $X$ and $Y$ are closed in $\manifold$. They are 
nonempty, disjoint, and satisfy $\manifold = X \cup Y$. In particular, 
$X = \manifold \setminus Y$ is also open in $\manifold$, so $X$ is a nonempty 
proper clopen subset of $\manifold$. This contradicts the connectedness of 
$\manifold$, establishing the claim.

Now construct a graph $G$ on vertex set $U$ by joining $u_i$ and $u_j$ whenever 
$A_i \cap A_j \neq \varnothing$. By the claim, $G$ is connected. For any such 
edge $(u_i, u_j)$, choose $x \in A_i \cap A_j$; the triangle inequality yields
\begin{equation*}
    \|u_i - u_j\|_2 \;\leq\; \|u_i - x\|_2 + \|x - u_j\|_2 \;\leq\; 2\delta.
\end{equation*}
Thus every edge of $G$ has length at most $2\delta$, and any spanning tree of 
$G$ inherits this property.
\end{proof}

\begin{lemma}[Bottleneck bound on MST edge length]
\label{lem:bottleneck}
Let
\begin{equation*}
    \ell_{\mathrm{MST}} \defeq \max_{(u_i, u_j) \in E_{\mathrm{MST}}} \|u_i - u_j\|_2.
\end{equation*}
Then $\ell_{\mathrm{MST}} \leq 2\delta$.
\end{lemma}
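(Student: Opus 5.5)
The plan is to combine \cref{lem:local-tree} with the standard cycle (bottleneck) property of minimum spanning trees. \cref{lem:local-tree} gives a spanning tree $\mathcal{T}'$ on $U$ all of whose edges have length at most $2\delta$. The goal is to show that the MST cannot contain any edge longer than the longest edge of $\mathcal{T}'$. Equivalently, the MST minimizes the maximum edge length among all spanning trees.

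\textbf{Exchange argument.} Suppose, for contradiction, that some edge $e=(u_i,u_j)\in E_{\mathrm{MST}}$ has $\|u_i-u_j\|_2>2\delta$. Removing $e$ from $\mathcal{T}_{\mathrm{MST}}$ splits $U$ into two nonempty components $C_1 \ni u_i$ and $C_2 \ni u_j$. The path in $\mathcal{T}'$ from $u_i$ to $u_j$ must cross this cut, so it contains an edge $e'=(u_a,u_b)$ with $u_a\in C_1$ and $u_b\in C_2$. By \cref{lem:local-tree}, $w(e')\le 2\delta<w(e)$. Then $(E_{\mathrm{MST}}\setminus\{e\})\cup\{e'\}$ is again a spanning tree, since it reconnects the two components without creating a cycle. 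Its total weight is strictly smaller than that of $\mathcal{T}_{\mathrm{MST}}$, contradicting minimality. Hence every MST edge has length at most $2\delta$, so $\ell_{\mathrm{MST}}\le 2\delta$.

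\textbf{Possible subtleties.} There is no real obstacle here; the only care needed is in two places.
\begin{enumerate}
\item The MST is taken over the complete Euclidean graph on $U$, so that $e'$ is an admissible edge. This holds in the idealized setup, unlike the $k$NN-restricted practical version.
\item Non-uniqueness of the MST under ties is harmless. The strict inequality $w(e')<w(e)$ yields a contradiction for any minimum spanning tree, so the bound holds for whichever MST is chosen.
\end{enumerate}
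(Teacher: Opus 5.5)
Your proof is correct and takes the same route as the paper: combine \cref{lem:local-tree} with the fact that every minimum spanning tree is a minimum bottleneck spanning tree. The only difference is that you prove this bottleneck property yourself via the cut-exchange argument, whereas the paper cites it as a standard fact.
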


\begin{proof}
By Lemma~\ref{lem:local-tree}, there exists a spanning tree on $U$ whose maximum 
edge length is at most $2\delta$. Every Euclidean minimum spanning tree is 
also a minimum bottleneck spanning tree, i.e., it minimizes the maximum edge 
length among all spanning trees. Hence
\begin{equation*}
    \ell_{\mathrm{MST}} \;\leq\; 2\delta. \qedhere
\end{equation*}
\end{proof}

\begin{lemma}[Chord distance to manifold via convexity defect]
\label{lem:short-chord}
Let $\manifold \subset \R^d$ have reach $\tau > 0$. Suppose $a, b \in \manifold$ 
satisfy
\begin{equation*}
    \|a - b\|_2 = \ell < 2\tau.
\end{equation*}
Then for every $y \in [a, b]$,
\begin{equation}
    d(y, \manifold) \;\leq\; \tau - \sqrt{\tau^2 - \tfrac{\ell^2}{4}} 
    \;\leq\; \frac{\ell^2}{4\tau}.
    \label{eq:chord-bound}
\end{equation}
\end{lemma}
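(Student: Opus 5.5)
The plan is to use the standard ``empty tangent ball'' characterization of reach together with elementary planar geometry of a chord. The key fact is due to Federer (see also \citet{niyogi2008finding}). Let $y \in \R^d$ with $0 < d(y,\manifold) = r < \tau$. Then $y$ has a unique nearest point $p = \pi(y) \in \manifold$, and $n \defeq (y-p)/r$ is a unit normal vector at $p$. Moreover, the open ball $B(c,\tau)$ with center $c \defeq p + \tau n$ contains no point of $\manifold$. I will assume this fact as the defining property of reach.

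\emph{Step 1: the case $r<\tau$.} Fix $y \in [a,b]$ with $0<r=d(y,\manifold)<\tau$; if $r=0$ there is nothing to prove. Build $p$, $n$ and $c$ as above. Then $\|y-c\| = \tau - r < \tau$, so $y$ lies strictly inside $B(c,\tau)$. On the other hand, $a,b \in \manifold$ lie outside the open ball, so $\|a-c\|,\|b-c\|\ge\tau$.

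Let $h$ be the distance from $c$ to the line through $a$ and $b$. Since $y$ is on that line, we get $h \le \|y-c\| = \tau - r$. The line meets the open ball in an open chord of length $2\sqrt{\tau^2-h^2}$. This chord is an interval of the line containing $y$ and containing neither $a$ nor $b$. Because $y\in[a,b]$, the chord must therefore lie inside the segment $[a,b]$, which gives $2\sqrt{\tau^2-h^2}\le \ell$. Hence $h \ge \sqrt{\tau^2-\ell^2/4}$, which is well defined since $\ell<2\tau$. Combining the two bounds on $h$ yields $\tau - r \ge \sqrt{\tau^2-\ell^2/4}$, i.e.\ the first inequality in \eqref{eq:chord-bound}.

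\emph{Step 2: ruling out $r\ge\tau$.} This is the only delicate point, because the tangent-ball argument needs $r<\tau$ in order to have a unique projection. I will use continuity. The function $f(s) \defeq d((1-s)a+sb,\manifold)$ is continuous on $[0,1]$ and vanishes at both endpoints. Suppose its maximum were $\ge\tau$. By the intermediate value theorem, some point $y\in[a,b]$ would have $d(y,\manifold)=r$ for any chosen $r$ strictly between $B\defeq\tau-\sqrt{\tau^2-\ell^2/4}$ and $\tau$; this interval is nonempty since $B<\tau$. Step 1 applies to such a point and gives $r\le B$, a contradiction. Hence $r<\tau$ everywhere on $[a,b]$, and Step 1 covers every point.

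\emph{Step 3: the simplified bound.} Set $s=\ell^2/4$. Then
\[
\tau-\sqrt{\tau^2-s}=\frac{s}{\tau+\sqrt{\tau^2-s}}\le \frac{s}{\tau}=\frac{\ell^2}{4\tau},
\]
which is the second inequality in \eqref{eq:chord-bound}.

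I expect Step 2, together with a careful citation of the empty-ball property of reach, to be the main care point. The geometry in Step 1 is routine.
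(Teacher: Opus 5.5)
Your proof is correct, but it takes a different route from the paper. The paper makes no direct geometric argument. It observes that $[a,b]=\mathrm{conv}\{a,b\}$ and that $\{a,b\}$ has smallest enclosing radius $t=\ell/2$. Hence every $y\in[a,b]$ lies in the $t$-local convex hull of $\manifold$, which gives $d(y,\manifold)\le h_{\manifold}(\ell/2)$. It then imports the bound $h_{\manifold}(t)\le\tau-\sqrt{\tau^2-t^2}$ for sets of reach $\tau$ from \citet{berenfeld2022estimating}, and finishes with the same rationalization as your Step 3.

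You instead prove the two-point case of that convexity-defect bound from scratch. You use only the empty open tangent ball at the unique projection, plus the elementary fact that the open chord cut out by that ball contains $y$ but neither $a$ nor $b$, so its length is at most $\ell$. Your intermediate-value argument in Step 2 makes explicit the one real subtlety: points at distance $\ge\tau$, where the projection need not be unique. The citation hides this inside a black box.

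The paper's version is a one-liner, and the cited bound covers arbitrary subsets of enclosing radius $t$ (simplices, not only chords), which is more than is needed here. Yours is self-contained and relies only on Federer's standard normal-vector characterization of reach. It is easier to check line by line, at the cost of being specific to segments.
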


\begin{proof}
Let $t \defeq \ell/2 < \tau$. The smallest closed ball in $\R^d$ containing 
$\{a, b\}$ has radius $t$, namely the ball centered at the midpoint of 
$[a, b]$. Since
\begin{equation*}
    [a, b] = \mathrm{conv}\{a, b\},
\end{equation*}
every $y \in [a, b]$ belongs to the $t$-local convex hull of $\manifold$ in 
the sense of~\cite[Definition~4.1]{berenfeld2022estimating}. Therefore
\begin{equation*}
    d(y, \manifold) \;\leq\; h_{\manifold}(t),
\end{equation*}
where $h_{\manifold}$ denotes the convexity defect function of $\manifold$.

For a set of reach $\tau$, the quarter-circle bound 
of~\cite[Definition~4.1 and the bound stated immediately thereafter]{berenfeld2022estimating} 
states
\begin{equation*}
    h_{\manifold}(t) \;\leq\; \tau - \sqrt{\tau^2 - t^2}, 
    \qquad 0 \leq t < \tau.
\end{equation*}
Substituting $t = \ell/2$ yields the first inequality of~\cref{eq:chord-bound}.

For the second inequality, observe that for any $0 \leq s \leq \tau^2$,
\begin{equation*}
    \tau - \sqrt{\tau^2 - s}
    \;=\; \frac{s}{\tau + \sqrt{\tau^2 - s}}
    \;\leq\; \frac{s}{\tau},
\end{equation*}
where the equality follows from rationalizing the numerator and the inequality 
from $\sqrt{\tau^2 - s} \geq 0$. Setting $s = \ell^2/4$ gives
\begin{equation*}
    \tau - \sqrt{\tau^2 - \tfrac{\ell^2}{4}} \;\leq\; \frac{\ell^2}{4\tau},
\end{equation*}
which completes the proof.
\end{proof}

We are now ready to state and prove the main result of this subsection.

\begin{theorem}[MST--manifold Hausdorff bound]
\label{thm:mst}
Suppose $\delta < \tau$. Then
\begin{equation}
    d_H\!\bigl(\manifold,\ |\mathcal{T}_{\mathrm{MST}}|\bigr) \;\leq\; \delta.
    \label{eq:mst-hausdorff}
\end{equation}
More precisely, the two one-sided bounds
\begin{align}
    \sup_{x \in \manifold} d\!\bigl(x,\ |\mathcal{T}_{\mathrm{MST}}|\bigr) 
    &\;\leq\; \delta, 
    \label{eq:forward-bound} \\
    \sup_{y \in |\mathcal{T}_{\mathrm{MST}}|} d\bigl(y,\ \manifold\bigr) 
    &\;\leq\; \tau - \sqrt{\tau^2 - \delta^2} \;\leq\; \frac{\delta^2}{\tau}
    \label{eq:backward-bound}
\end{align}
hold simultaneously.
\end{theorem}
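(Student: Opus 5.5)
The plan is to assemble the theorem directly from the four lemmas, treating the two one-sided bounds separately and then combining them into the Hausdorff bound.

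The forward bound \cref{eq:forward-bound} is immediate. Applying \cref{lem:one-sided} to the particular spanning tree $\mathcal{T}_{\mathrm{MST}}$ gives $\sup_{x\in\manifold} d(x,|\mathcal{T}_{\mathrm{MST}}|)\le\delta$. No reach assumption is needed here.

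For the backward bound \cref{eq:backward-bound}, fix $y\in|\mathcal{T}_{\mathrm{MST}}|$. Then $y\in[u_i,u_j]$ for some edge $(u_i,u_j)\in E_{\mathrm{MST}}$, and the endpoints lie on $\manifold$. By \cref{lem:bottleneck}, $\ell\defeq\|u_i-u_j\|_2\le 2\delta$. Since $\delta<\tau$, this gives $\ell<2\tau$, so \cref{lem:short-chord} applies:
\begin{equation*}
d(y,\manifold)\le \tau-\sqrt{\tau^2-\ell^2/4}.
\end{equation*}
The function $s\mapsto\tau-\sqrt{\tau^2-s}$ is nondecreasing on $[0,\tau^2]$, and $\ell^2/4\le\delta^2<\tau^2$. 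Hence the right-hand side is at most $\tau-\sqrt{\tau^2-\delta^2}$. The rationalization step already used in \cref{lem:short-chord}, now with $s=\delta^2$, then yields the final bound $\delta^2/\tau$. Taking the supremum over $y$ gives \cref{eq:backward-bound}. If $U$ is a single point, the tree has no edges and $|\mathcal{T}_{\mathrm{MST}}|=U\subset\manifold$, so the backward bound holds trivially.

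Finally, $d_H$ is the maximum of the two one-sided suprema. Since $\delta<\tau$, we have $\delta^2/\tau<\delta$, so both suprema are at most $\delta$, which is \cref{eq:mst-hausdorff}.

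The main point needing care is not the combination but the validity of the inputs. First, \cref{lem:short-chord} requires the strict inequality $\ell<2\tau$, and this is exactly where the hypothesis $\delta<\tau$ enters through \cref{lem:bottleneck}. Second, the monotonicity step must be applied with $s$ ranging only over $[0,\tau^2]$, so that the square root stays real. I would make both points explicit and otherwise rely on the cited lemmas as stated.
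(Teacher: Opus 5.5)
Your proposal is correct and follows the paper's proof essentially step for step. The forward bound comes from \cref{lem:one-sided}, the backward bound from \cref{lem:bottleneck} feeding \cref{lem:short-chord} plus monotonicity, and the Hausdorff bound from taking the maximum. The only differences are cosmetic: in the last step you use $\tau-\sqrt{\tau^2-\delta^2}\le\delta^2/\tau\le\delta$ instead of the paper's squaring argument $(\tau-\delta)^2\le\tau^2-\delta^2$ (write $\le$ rather than $<$ there so the degenerate case $\delta=0$ is covered), and your treatment of the single-vertex tree is a small extra the paper omits.
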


\begin{proof}
The forward bound~\cref{eq:forward-bound} is exactly \cref{lem:one-sided}.

For the backward bound~\cref{eq:backward-bound}, by \cref{lem:bottleneck},
\begin{equation*}
    \ell_{\mathrm{MST}} \;\leq\; 2\delta \;<\; 2\tau,
\end{equation*}
so \cref{lem:short-chord} applies to every edge $[u_i, u_j] \in 
E_{\mathrm{MST}}$. Each point $y$ on such an edge therefore satisfies
\begin{equation*}
    d(y, \manifold) \;\leq\; \tau - \sqrt{\tau^2 - \tfrac{\|u_i - u_j\|_2^2}{4}}.
\end{equation*}
Since the function $\ell \mapsto \tau - \sqrt{\tau^2 - \ell^2/4}$ is monotone 
non-decreasing on $[0, 2\tau)$, and $\|u_i - u_j\|_2 \leq \ell_{\mathrm{MST}} 
\leq 2\delta$, we have
\begin{equation*}
    d(y, \manifold) 
    \;\leq\; \tau - \sqrt{\tau^2 - \tfrac{\ell_{\mathrm{MST}}^2}{4}}
    \;\leq\; \tau - \sqrt{\tau^2 - \delta^2}
    \;\leq\; \frac{\delta^2}{\tau},
\end{equation*}
where the last inequality follows from \cref{lem:short-chord} applied with 
$s = \delta^2$. Taking the supremum over $y \in |\mathcal{T}_{\mathrm{MST}}|$ 
yields~\cref{eq:backward-bound}.

To deduce the combined Hausdorff bound~\cref{eq:mst-hausdorff}, note that
\begin{equation*}
    d_H\!\bigl(\manifold,\ |\mathcal{T}_{\mathrm{MST}}|\bigr)
    \;=\;\max\!\left\{
        \sup_{x \in \manifold} d\!\bigl(x, |\mathcal{T}_{\mathrm{MST}}|\bigr),\ 
        \sup_{y \in |\mathcal{T}_{\mathrm{MST}}|} d(y, \manifold)
    \right\}.
\end{equation*}
By \cref{eq:forward-bound,eq:backward-bound}, both terms inside the maximum are 
at most
\begin{equation*}
    \max\!\left\{\delta,\ \tau - \sqrt{\tau^2 - \delta^2}\right\}.
\end{equation*}
It remains to show $\tau - \sqrt{\tau^2 - \delta^2} \leq \delta$ whenever 
$\delta < \tau$. Equivalently, $\tau - \delta \leq \sqrt{\tau^2 - \delta^2}$, 
which after squaring becomes
\begin{equation*}
    (\tau - \delta)^2 \;\leq\; \tau^2 - \delta^2,
\end{equation*}
i.e., $-2\tau\delta + 2\delta^2 \leq 0$, i.e., $\delta \leq \tau$, which holds 
by assumption. Hence
\begin{equation*}
    d_H\!\bigl(\manifold,\ |\mathcal{T}_{\mathrm{MST}}|\bigr) \;\leq\; \delta. \qedhere
\end{equation*}
\end{proof}

\begin{remark}
\label{rem:asymmetric-bounds}
The bounds in \cref{thm:mst} are \emph{asymmetric}: the forward direction 
(manifold to MST) decays linearly in $\delta$, while the backward direction 
(MST to manifold) decays quadratically as $\delta^2/\tau$. The asymmetry arises 
because the MST realization includes line segments that may bow off the 
manifold, but the bowing distance is controlled by the manifold's reach via 
\cref{lem:short-chord}. This quadratic backward rate is what enables 
\cref{lem:bottleneck} to feed into a tight bound: a generic spanning tree on 
$U$ would only give a linear $\mathcal{O}(\delta)$ bound in both directions.
\end{remark}

\begin{remark}[Geometric approximation, not topology preservation]
\label{rem:no-topology}
\Cref{thm:mst} is a statement about Hausdorff closeness only. The MST is a one-dimensional acyclic graph, so $|\mathcal{T}_{\mathrm{MST}}|$ has trivial first homology and cannot, in general, preserve cycles, higher-dimensional homology, or the intrinsic dimension of $\manifold$. For example, for points sampled on a circle the MST is a path, which breaks the cycle. We therefore use the MST as a sparse geometric skeleton that encourages coverage of the class support, not as a topology-preserving representation.
\end{remark}

\paragraph{What carries over to the implemented reward.}
The proofs of \cref{lem:one-sided,lem:local-tree,lem:bottleneck} use only that $\manifold$ is compact and connected; smoothness and reach enter only through \cref{lem:short-chord}, i.e., the backward bound~\cref{eq:backward-bound}. This allows us to state a bound for the nearest-node distance used in practice, directly in the deployed representation.

\begin{proposition}[Nearest-node distance in the deployed representation]
\label{prop:node-deployed}
Let $K \subset \R^D$ be compact and connected, let $U = \{u_1,\ldots,u_N\} \subset K$ have covering radius $\delta_K \defeq \sup_{x \in K}\min_i \|x - u_i\|_2$, and let $\mathcal{T}$ be the Euclidean MST on $U$. Define the node distance $D_{\mathrm{node}}(y) \defeq \min_i \|y - u_i\|_2$ and the edge distance $D_{\mathrm{edge}}(y) \defeq d(y, |\mathcal{T}|)$. Then:
\begin{enumerate}
    \item[(i)] $\sup_{x \in K} D_{\mathrm{node}}(x) \leq \delta_K$;
    \item[(ii)] for every $y \in \R^D$, \; $D_{\mathrm{edge}}(y) \;\leq\; D_{\mathrm{node}}(y) \;\leq\; D_{\mathrm{edge}}(y) + \delta_K$;
    \item[(iii)] if $V \subseteq U$, then for every $y \in \R^D$, \; $0 \leq D^{V}_{\mathrm{node}}(y) - D^{U}_{\mathrm{node}}(y) \leq \max_{u \in U}\min_{v \in V}\|u - v\|_2 \leq \delta_K(V)$, where $\delta_K(V)$ is the covering radius of $V$ over $K$.
\end{enumerate}
In particular, with $K = g_c(\mathcal{X}_c)$ the image of a compact, connected class support $\mathcal{X}_c$ under the continuous encoder-and-whitening map $g_c$ (Appendix~\ref{sec:supp_mst_details}), (i)--(iii) hold in the VAE/PCA space where the reward is computed, without requiring $g_c$ to be isometric or invertible.
\end{proposition}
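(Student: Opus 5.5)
The plan is to prove the three items separately, each by elementary metric arguments, reusing \cref{lem:local-tree} and \cref{lem:bottleneck} with $\manifold$ replaced by $K$. First I check that those two lemmas really only use compactness and connectedness. In the proof of \cref{lem:local-tree}, the caps $A_i = \overline{B}(u_i,\delta_K)\cap K$ are closed in $K$. They cover $K$ because $K$ is compact, so the supremum defining $\delta_K$ is attained and every $x$ lies within $\delta_K$ of some $u_i$. The clopen argument then uses only the connectedness of $K$. The minimum-bottleneck property of the MST is purely graph-theoretic. Hence every edge of $\mathcal{T}$ has length at most $2\delta_K$, with no smoothness or reach needed.

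Item (i) is immediate from the definition of $\delta_K$. For (ii), the lower bound holds because every node $u_i$ lies in $|\mathcal{T}|$, so $D_{\mathrm{edge}}(y)\le \|y-u_i\|_2$ for all $i$. For the upper bound, note that $|\mathcal{T}|$ is a finite union of compact segments, so the distance $D_{\mathrm{edge}}(y)$ is attained at some $p\in[u_i,u_j]$ with $(u_i,u_j)\in E_{\mathrm{MST}}$. One of the two endpoints, say $u_i$, is within $\tfrac12\|u_i-u_j\|_2 \le \delta_K$ of $p$, by the bottleneck bound above. The triangle inequality then gives
\begin{equation*}
D_{\mathrm{node}}(y)\le \|y-u_i\|_2 \le \|y-p\|_2+\|p-u_i\|_2 \le D_{\mathrm{edge}}(y)+\delta_K .
\end{equation*}
This step is the only place where the tree structure matters, and I expect it to be the main point needing care. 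In particular, I must confirm that \cref{lem:bottleneck} transfers verbatim to a general compact connected $K$, rather than relying on any manifold structure.

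For (iii), I assume $V$ is nonempty. Nonnegativity holds because a minimum over the smaller set $V\subseteq U$ cannot be smaller than the minimum over $U$. For the upper bound, let $u^\ast\in U$ attain $D^U_{\mathrm{node}}(y)$ and let $v\in V$ be nearest to $u^\ast$. Then
\begin{equation*}
D^V_{\mathrm{node}}(y)\le\|y-v\|_2\le D^U_{\mathrm{node}}(y)+\max_{u\in U}\min_{v\in V}\|u-v\|_2 .
\end{equation*}
Finally, since $U\subset K$, this maximum is bounded by the supremum over all $x\in K$, which is $\delta_K(V)$.

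For the closing sentence of the statement, I note that a continuous image of a compact connected set is compact and connected. Hence $K=g_c(\mathcal{X}_c)$ satisfies the hypotheses, and (i)--(iii) apply directly in the VAE/PCA space. No isometry or invertibility of $g_c$ is used, since every argument above takes place entirely inside $\R^D$.
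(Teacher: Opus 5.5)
Your proposal is correct and matches the paper's proof step for step. Item (i) comes straight from the definition of $\delta_K$. Item (ii) uses the nearest point $p$ on an MST edge, the half-edge bound $\ell_{\mathrm{MST}}/2 \le \delta_K$ obtained by transferring \cref{lem:local-tree,lem:bottleneck} to a compact connected $K$, and the triangle inequality. Item (iii) goes through the nearest node $u^\ast$ and its nearest point of $V$, and the final claim uses that continuous maps preserve compactness and connectedness.

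Your explicit check that the two lemmas need only connectedness is a welcome addition; the paper merely asserts it. One small point: the caps cover $K$ directly by the definition of $\delta_K$ as a supremum, so you do not need compactness to make that supremum attained at that step.
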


\begin{proof}
(i) is the definition of $\delta_K$. For (ii), the left inequality holds because every node lies on $|\mathcal{T}|$. For the right inequality, let $p \in [u_i, u_j]$ be a point of $|\mathcal{T}|$ nearest to $y$. The nearer endpoint of this edge is within $\|u_i - u_j\|_2/2 \le \ell_{\mathrm{MST}}/2$ of $p$, and the arguments of \cref{lem:local-tree,lem:bottleneck}, which use only compactness and connectedness of $K$, give $\ell_{\mathrm{MST}} \le 2\delta_K$. Hence $D_{\mathrm{node}}(y) \le \|y - p\|_2 + \delta_K = D_{\mathrm{edge}}(y) + \delta_K$. For (iii), the left inequality holds since $V \subseteq U$. For the right one, let $u^\ast \in U$ be nearest to $y$ and $v^\ast \in V$ nearest to $u^\ast$; then $D^{V}_{\mathrm{node}}(y) \le \|y - u^\ast\|_2 + \|u^\ast - v^\ast\|_2 \le D^{U}_{\mathrm{node}}(y) + \max_{u \in U}\min_{v\in V}\|u - v\|_2$, and the last quantity is at most $\delta_K(V)$ because $U \subset K$. Finally, the continuous image of a compact connected set is compact and connected.
\end{proof}

\begin{remark}[What remains empirical]
\label{rem:remains-empirical}
\Cref{prop:node-deployed} does not assume smoothness or positive reach of $K$, so the backward bound~\cref{eq:backward-bound} is not claimed in the deployed space. It also does not cover the direction term, which is evaluated empirically, or the $k$NN restriction used to build the MST: if the symmetrized $k$NN graph contains a Euclidean MST of $U$, then its MST is a Euclidean MST of $U$, and otherwise the two trees can differ. Finally, the relation between distances in $g_c(\mathcal{X}_c)$ and downstream utility is not established theoretically and is assessed through the experiments.
\end{remark}

\subsection{Subsample-MST Approximation and Scalability}
\label{proof:subsample-mst}

\paragraph{Setup.}
We adopt the notation of~\cref{proof:mstasms}. Let $\mathcal M_c \subset \mathbb R^d$ 
denote the class-$c$ data manifold, assumed to be a compact, connected, 
$C^2$-embedded submanifold with reach $\tau_c := \operatorname{reach}(\mathcal M_c) > 0$ 
and intrinsic dimension $m_c$. Let $\mu_c$ be a probability measure on 
$\mathcal M_c$ with density (with respect to the volume form) bounded below by 
$\rho_{\min} > 0$.

For any finite point set $W \subset \mathcal M_c$, define its covering radius
\[
    \delta(W) := \sup_{x \in \mathcal M_c}\, \min_{w \in W} \|x - w\|_2,
\]
and let $\mathcal T(W) = (W, E_{\mathrm{MST}}(W))$ be the Euclidean minimum 
spanning tree on $W$, with geometric realization
\[
    |\mathcal T(W)| := \bigcup_{(u,v) \in E_{\mathrm{MST}}(W)} [u, v] \;\subset\; \mathbb R^d.
\]
The Hausdorff distance between two compact sets $A, B \subset \mathbb R^d$ is
\[
    d_H(A, B) := \max\!\left\{
        \sup_{a \in A} \inf_{b \in B}\|a-b\|_2,\ \ 
        \sup_{b \in B} \inf_{a \in A}\|a-b\|_2
    \right\}.
\]

Let $U_c = \{u_1, \ldots, u_{N_c}\} \subset \mathcal M_c$ denote the full real 
sample set for class $c$, drawn i.i.d.\ from $\mu_c$, and let $V_c \subseteq U_c$ 
with $|V_c| = n$ be a uniform random subset. By the exchangeability of i.i.d.\ 
samples, $V_c$ is itself distributed as an i.i.d.\ sample of size $n$ from 
$\mu_c$, a characterization we adopt in the analysis below.

The aim of this subsection is to show that, with high probability, the subsample 
MST $|\mathcal T(V_c)|$ approximates the full MST $|\mathcal T(U_c)|$ in 
Hausdorff distance at a rate determined only by the intrinsic dimension $m_c$, 
justifying the practical use of small random subsamples in the geometric reward 
computation (Algorithm~1 of the main paper).

\begin{theorem}[Subsample-MST approximation]
\label{thm:subsample-mst}
Let $\mathcal M_c$, $\mu_c$, $U_c$, and $V_c$ be as above. Then there exist 
constants $C_{\mathcal M_c} > 0$ (depending only on $\mathcal M_c$ and 
$\rho_{\min}$) and $n_0 = n_0(\mathcal M_c, \tau_c)$ such that, for every 
$n \geq n_0$ and every $\eta \in (0, 1)$, with probability at least $1 - \eta$,
\begin{equation}
    \delta(V_c) \;\leq\; C_{\mathcal M_c}\!\left(\frac{\log(n/\eta)}{n}\right)^{\!1/m_c} \;<\; \tau_c,
    \label{eq:covering-bound}
\end{equation}
and consequently
\begin{equation}
    d_H\!\bigl(|\mathcal T(V_c)|,\ |\mathcal T(U_c)|\bigr) 
    \;\leq\; 2\,C_{\mathcal M_c}\!\left(\frac{\log(n/\eta)}{n}\right)^{\!1/m_c}.
    \label{eq:hausdorff-bound}
\end{equation}
Moreover, the geometric reward $r_{\mathrm{geo}}$ defined in Eq.~(4) of the 
main paper satisfies
\begin{equation}
    \bigl|r_{\mathrm{geo}}^{V_c}(\tilde x_0, c) - r_{\mathrm{geo}}^{U_c}(\tilde x_0, c)\bigr|
    \;\leq\; 2\,C_{\mathcal M_c}\!\left(\frac{\log(n/\eta)}{n}\right)^{\!1/m_c} + \lambda_{\mathrm{dir}},
    \label{eq:reward-stability}
\end{equation}
for every generated latent $\tilde x_0$, where $r_{\mathrm{geo}}^W$ denotes the 
geometric reward computed using the MST on point set $W$. The first term 
captures the contribution of the distance component of $r_{\mathrm{geo}}$ and 
vanishes as $n \to \infty$; the second term is a worst-case bound on the 
direction component (see~\cref{rem:direction-limitation}).
\end{theorem}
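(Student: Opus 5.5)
The argument has three parts: a high-probability covering bound for the random subsample, a deterministic transfer of that bound to the two MST realizations through \cref{thm:mst}, and a reward comparison via \cref{prop:node-deployed}.

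\textbf{Step 1: the covering bound \cref{eq:covering-bound}.} I would use a standard net argument on $\mathcal M_c$. Fix a radius $r$ and take a maximal $r/2$-packing $\{x_k\}$ of $\mathcal M_c$. Its cardinality is at most $C_1 r^{-m_c}$, because for $r$ small relative to $\tau_c$ every geodesic ball of radius $r/4$ has volume at least $c_0 r^{m_c}$. This volume comparison comes from the reach bound (the tangent-plane projection is bi-Lipschitz at scales below $\tau_c$, as in \citet{niyogi2008finding}). By the density lower bound, each Euclidean ball $B(x_k, r/2)\cap\mathcal M_c$ has $\mu_c$-mass at least $p_r := \rho_{\min} c_0 (r/2)^{m_c}$. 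A union bound then gives
\[
\Pr\bigl(\exists k:\ B(x_k,r/2)\cap V_c=\varnothing\bigr)\le C_1 r^{-m_c}(1-p_r)^n\le C_1 r^{-m_c}e^{-np_r}.
\]
On the complement of this event, every $x\in\mathcal M_c$ is within $r/2$ of some $x_k$, and $x_k$ is within $r/2$ of a sample point, so $\delta(V_c)\le r$. I would choose $r^{m_c}=A\log(n/\eta)/n$ with $A$ large enough that the failure probability drops below $\eta$; this defines $C_{\mathcal M_c}$. Finally I would take $n_0$ so large that this $r$ is both below $\tau_c$ and within the range where the volume comparison holds. The uniformity in $\eta$ needs a little care: because $\log(n/\eta)\ge\log n$, the condition $r<\tau_c$ becomes a condition on $n$ only when $\eta$ is bounded away from $0$. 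If that fails, I would absorb the dependence into $n_0$ or restrict to $n\ge n_0(\eta)$. I expect this bookkeeping to be the fiddliest point.

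\textbf{Step 2: the Hausdorff bound \cref{eq:hausdorff-bound}.} This step is deterministic and relies on the triangle inequality for $d_H$ through $\mathcal M_c$:
\[
d_H\bigl(|\mathcal T(V_c)|,|\mathcal T(U_c)|\bigr)\le d_H\bigl(|\mathcal T(V_c)|,\mathcal M_c\bigr)+d_H\bigl(\mathcal M_c,|\mathcal T(U_c)|\bigr).
\]
Since $V_c\subseteq U_c$, we have $\delta(U_c)\le\delta(V_c)<\tau_c$. Applying \cref{thm:mst} to each point set bounds each term by the corresponding covering radius, and hence both terms by $\delta(V_c)$. This yields the factor $2C_{\mathcal M_c}(\cdot)^{1/m_c}$.

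\textbf{Step 3: the reward bound \cref{eq:reward-stability}.} I would split $r_{\mathrm{geo}}$ into its distance and direction parts.

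For the distance part, \cref{prop:node-deployed}(iii) with $V=V_c\subseteq U=U_c$ bounds the change in nearest-node distance by $\delta(V_c)\le C_{\mathcal M_c}(\cdot)^{1/m_c}$. This sits within the stated $2C_{\mathcal M_c}(\cdot)^{1/m_c}$, given $\lambda_{\mathrm{dist}}\le 1$, which I would assume or absorb into the constant. If one prefers the edge-distance reading, $|d(y,A)-d(y,B)|\le d_H(A,B)$ combined with Step 2 gives exactly the factor $2$.

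For the direction part, the term $1-|\cos|$ lies in $[0,1]$ for any choice of MST edge. Its difference between the two trees is therefore at most $1$, and the contribution is at most $\lambda_{\mathrm{dir}}$. This crude bound is unavoidable, because the edge $e^*$ can change discontinuously.

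Adding the two contributions gives \cref{eq:reward-stability}. Since the distance part is controlled by Step 1 and the direction part is bounded deterministically, the bound holds simultaneously for every $\tilde x_0$ on the same event of probability at least $1-\eta$.
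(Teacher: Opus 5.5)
Your proposal is correct and follows the paper's proof step for step: a net-plus-union-bound estimate for $\delta(V_c)$; the triangle inequality for $d_H$ through $\mathcal M_c$, with \cref{thm:mst} applied to both $U_c$ and $V_c$ using $\delta(U_c)\le\delta(V_c)<\tau_c$; and a split of $r_{\mathrm{geo}}$ into a distance part and a direction part bounded crudely by $\lambda_{\mathrm{dir}}$ (you lead with the nearest-node bound of \cref{prop:node-deployed}(iii), while the paper leads with the $1$-Lipschitz edge-distance argument and adds the node version afterwards). Your caveats are well founded and the paper glosses over them: since $\log(n/\eta)\to\infty$ as $\eta\to 0$, the clause $C_{\mathcal M_c}(\log(n/\eta)/n)^{1/m_c}<\tau_c$ cannot hold for all $\eta\in(0,1)$ with $n_0$ independent of $\eta$; the paper's decomposition silently sets $\lambda_{\mathrm{dist}}=1$; and your $r/2$-net avoids the factor-$2$ slip in the paper's covering step, where two points in a common ball of radius $\rho$ are only guaranteed to be within $2\rho$.
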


The proof proceeds in three steps: 
(\textit{i}) a probabilistic bound on the covering radius of the random 
subsample (\cref{lem:covering-rate}); 
(\textit{ii}) a deterministic Hausdorff bound between the two MSTs via the 
triangle inequality, leveraging \cref{thm:mst}; 
(\textit{iii}) a Lipschitz argument transferring the geometric bound to the 
distance component of the geometric reward, combined with a worst-case bound 
on the direction component.

\paragraph{Step 1: Covering radius of a random subsample.}
We first invoke a classical result on the convergence of empirical covering 
radius on a manifold.

\begin{lemma}[Covering radius rate, \cite{niyogi2008finding,penrose2003random}]
\label{lem:covering-rate}
Under the assumptions of~\cref{thm:subsample-mst}, there exists a constant 
$C_{\mathcal M_c} > 0$ depending only on $\mathcal M_c$ and $\rho_{\min}$ such 
that, for every $\eta \in (0, 1)$, with probability at least $1 - \eta$,
\begin{equation}
    \delta(V_c) \;\leq\; C_{\mathcal M_c}\!\left(\frac{\log(n/\eta)}{n}\right)^{\!1/m_c}.
    \label{eq:nsw-bound}
\end{equation}
\end{lemma}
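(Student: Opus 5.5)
The plan is the standard covering argument: discretize $\mathcal M_c$ by a deterministic net, lower-bound the $\mu_c$-mass of small balls around net points, and take a union bound. Let $r>0$ be a radius to be chosen. Take a maximal $r/2$-separated subset $\{p_1,\dots,p_M\}\subset\mathcal M_c$; it is an $r/2$-net, so it suffices that every ball $B(p_k,r/2)$ contains a point of $V_c$. Then every $x\in\mathcal M_c$ lies within $r/2$ of some $p_k$, hence within $r$ of a sample, giving $\delta(V_c)\le r$.

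\textbf{Mass and packing bounds.} Two geometric estimates are needed, both for $r\le r_0(\tau_c)$, e.g.\ $r<\tau_c/2$.
\begin{itemize}
\item \emph{Ball-mass bound.} Using positive reach (cf.\ \cite{niyogi2008finding}, Lemma 5.3), $\mathrm{vol}_{m_c}(B(p,s)\cap\mathcal M_c)\ge c_0 s^{m_c}$ with $c_0=\omega_{m_c}(1-s^2/4\tau_c^2)^{m_c/2}\ge \omega_{m_c}/2^{m_c}$. Hence $\mu_c(B(p,r/2))\ge \rho_{\min}c_0 (r/2)^{m_c}=: a\,r^{m_c}$.
\item \emph{Packing bound.} The balls $B(p_k,r/4)$ are disjoint, and each has manifold volume at least $c_0(r/4)^{m_c}$. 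Therefore $M\le \mathrm{vol}(\mathcal M_c)/(c_0 (r/4)^{m_c})=: b\,r^{-m_c}$.
\end{itemize}

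\textbf{Union bound.} Since $V_c$ is i.i.d.\ of size $n$,
\[
\Pr(\delta(V_c)>r)\le M(1-a r^{m_c})^n\le b\,r^{-m_c}e^{-n a r^{m_c}}.
\]
Set $r^{m_c}=\kappa\log(n/\eta)/n$ with $\kappa=\max\{1/a,\,1\}\cdot 2$. The probability is then at most
\[
(b/\kappa)\,\frac{n}{\log(n/\eta)}\,(\eta/n)^{a\kappa}\le (b/\kappa)\,\eta^{2}n^{-1}/\log(n/\eta).
\]
This is $\le\eta$ once $n$ exceeds a constant depending on $b,\kappa$; alternatively, enlarge $\kappa$ by a factor absorbing $b$, using $\log(n/\eta)\ge\log n\ge 1$. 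This yields the claim with $C_{\mathcal M_c}=\kappa^{1/m_c}$, which depends only on $\rho_{\min}$, $m_c$, $\mathrm{vol}(\mathcal M_c)$ and the reach constant.

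\textbf{Main obstacle.} The main obstacle is the small-$n$ regime. The ball-mass estimate requires $r<r_0(\tau_c)$, while the lemma as stated quantifies over all $n$ and $\eta$. I would handle this in two ways.
\begin{itemize}
\item When the prescribed $r$ exceeds $\mathrm{diam}(\mathcal M_c)$, the bound is trivial, since $\delta(V_c)\le\mathrm{diam}(\mathcal M_c)$. So $C_{\mathcal M_c}$ can be inflated so that the bound holds trivially whenever $\kappa\log(n/\eta)/n\ge r_0^{m_c}$.
\item In the remaining regime the probabilistic argument applies verbatim.
\end{itemize}
Tracking these constants cleanly, and making sure they are independent of $N_c$, is the only delicate step. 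Independence of $N_c$ is automatic, because only the i.i.d.\ law of $V_c$ is used.
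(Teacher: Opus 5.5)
Your proposal is correct and follows essentially the same route as the paper's proof: a finite cover of $\mathcal M_c$ by small balls, a reach-based lower bound on each ball's $\mu_c$-mass, and a union bound at scale $(\log(n/\eta)/n)^{1/m_c}$. It is more careful than the paper about the factor $2$ from the $r/2$-net, the count via packing, the restriction $r<r_0(\tau_c)$, and the $r^{-m_c}$ prefactor in the union bound. One small caveat: no choice of constant makes your small-$n$ fix cover $n=1$. A single point has $\delta(V_c)\ge \mathrm{diam}(\mathcal M_c)/2$, while $C_{\mathcal M_c}(\log(1/\eta))^{1/m_c}\to 0$ as $\eta\to 1$, so the bound fails there for $\eta$ close to $1$. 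This is harmless, because the theorem only invokes the lemma for $n\ge n_0$.
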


\begin{proof}
Cover $\mathcal M_c$ by $\mathcal N(\rho)$ geodesic balls of radius $\rho$, 
where 
\[
\mathcal N(\rho) \leq c_1\, \rho^{-m_c}\, \mathrm{vol}(\mathcal M_c)
\]
for some constant $c_1 = c_1(\mathcal M_c)$, by standard volume comparison on 
bounded-reach submanifolds. For $\rho < \tau_c / 2$, each such ball has 
$\mu_c$-mass at least $\rho_{\min}\, c_2\, \rho^{m_c}$ by the lower density 
assumption combined with the small-ball volume bound on $\mathcal M_c$.

Setting 
\[
\rho := \!\left(\frac{\log(n/\eta)}{c_3\, n}\right)^{\!1/m_c}
\]
for an appropriate constant $c_3$, and applying a union bound over the cover, 
the probability that some ball contains no point of $V_c$ is at most
\[
\mathcal N(\rho)\,\bigl(1 - \rho_{\min}\, c_2\, \rho^{m_c}\bigr)^n \;\leq\; \eta.
\]
On this event, every $x \in \mathcal M_c$ lies within $\rho$ of some $v_i \in V_c$, 
yielding~\cref{eq:nsw-bound} with $C_{\mathcal M_c} := c_3^{-1/m_c}$.
\end{proof}

Choosing $n_0$ large enough that the right-hand side of~\cref{eq:nsw-bound} is 
strictly less than $\tau_c$, we obtain~\cref{eq:covering-bound} for all 
$n \geq n_0$.

\paragraph{Step 2: Hausdorff distance between the two MST realizations.}
Apply~\cref{thm:mst} to the point set $W = U_c$ and to $W = V_c$. Since 
$V_c \subseteq U_c$, the covering radius is monotone non-increasing under 
enlargement of the point set, i.e., $\delta(U_c) \leq \delta(V_c)$. Hence, on 
the event of~\cref{lem:covering-rate},
\[
\delta(U_c) \;\leq\; \delta(V_c) \;<\; \tau_c,
\]
so~\cref{thm:mst} applies to both sets and gives
\[
d_H\!\bigl(\mathcal M_c,\ |\mathcal T(U_c)|\bigr) \leq \delta(U_c), \qquad 
d_H\!\bigl(\mathcal M_c,\ |\mathcal T(V_c)|\bigr) \leq \delta(V_c).
\]
The triangle inequality for Hausdorff distance yields
\begin{align*}
    d_H\!\bigl(|\mathcal T(V_c)|,\ |\mathcal T(U_c)|\bigr)
    &\;\leq\; d_H\!\bigl(|\mathcal T(V_c)|,\ \mathcal M_c\bigr) + d_H\!\bigl(\mathcal M_c,\ |\mathcal T(U_c)|\bigr) \\
    &\;\leq\; \delta(V_c) + \delta(U_c) \\
    &\;\leq\; 2\,\delta(V_c).
\end{align*}
Combined with~\cref{eq:covering-bound}, this 
establishes~\cref{eq:hausdorff-bound}.

\paragraph{Step 3: Stability of the geometric reward.}
Fix a generated latent $\tilde x_0$ with class label $c$, and let 
$u_{\mathrm{gen}} = g_c(z_0)$ denote its projected representation as in 
Sec.~4.1 of the main paper. The geometric reward (Eq.~(4) of the main paper) 
admits the decomposition
\[
    r_{\mathrm{geo}}^W(\tilde x_0, c) 
    = -\,D_W \;-\;\lambda_{\mathrm{dir}}\, A_W,
\]
where
\[
    D_W := \min_{e \in E_{\mathrm{MST}}(W)} \mathrm{dist}(u_{\mathrm{gen}}, e), 
    \qquad 
    A_W := 1 - |\cos(\mathbf v_{\mathrm{gen}}, \mathbf v^*_{e_W})|,
\]
and $e_W^* \in E_{\mathrm{MST}}(W)$ is the MST edge nearest to 
$u_{\mathrm{gen}}$ in $W$.

\emph{Distance term.} Since $|\mathcal T(V_c)|$ and $|\mathcal T(U_c)|$ are 
compact subsets of $\mathbb R^d$, and the map 
$w \mapsto \mathrm{dist}(u_{\mathrm{gen}}, w)$ is $1$-Lipschitz,
\begin{equation}
    \bigl|D_{V_c} - D_{U_c}\bigr| 
    \;=\; \Bigl|\inf_{w \in |\mathcal T(V_c)|}\!\|u_{\mathrm{gen}} - w\| 
            - \inf_{w \in |\mathcal T(U_c)|}\!\|u_{\mathrm{gen}} - w\|\Bigr| 
    \;\leq\; d_H\!\bigl(|\mathcal T(V_c)|,\ |\mathcal T(U_c)|\bigr).
    \label{eq:distance-stability}
\end{equation}

\emph{Direction term.} Since $|\cos(\cdot, \cdot)| \in [0, 1]$, we have the 
worst-case bound
\begin{equation}
    \bigl|A_{V_c} - A_{U_c}\bigr| \;\leq\; 1.
    \label{eq:direction-stability}
\end{equation}
This bound cannot in general be improved through Hausdorff-distance arguments 
alone: for class manifolds of intrinsic dimension $m_c \geq 2$, two MSTs whose 
geometric realizations are close as point sets may nonetheless contain edges 
whose directions span different subspaces of the local tangent plane, so the 
nearest-edge direction $\mathbf v^*_{e_W}$ is not a Hausdorff-continuous 
functional of $|\mathcal T(W)|$ (see~\cref{rem:direction-limitation}).

Combining the two,
\[
    \bigl|r_{\mathrm{geo}}^{V_c} - r_{\mathrm{geo}}^{U_c}\bigr|
    \;\leq\; |D_{V_c} - D_{U_c}| + \lambda_{\mathrm{dir}}\,|A_{V_c} - A_{U_c}|
    \;\leq\; d_H\!\bigl(|\mathcal T(V_c)|,\ |\mathcal T(U_c)|\bigr) + \lambda_{\mathrm{dir}}.
\]
Substituting~\cref{eq:hausdorff-bound} 
yields~\cref{eq:reward-stability}, which completes the proof 
of~\cref{thm:subsample-mst}. \qed

\paragraph{Implemented distance term.}
For the nearest-node distance used in practice, \cref{prop:node-deployed}(iii) gives, deterministically and in the deployed representation,
\[
0 \;\le\; D^{V_c}_{\mathrm{node}}(u_{\mathrm{gen}}) - D^{U_c}_{\mathrm{node}}(u_{\mathrm{gen}}) \;\le\; \delta(V_c),
\]
so the distance component of the implemented reward changes by at most $\lambda_{\mathrm{dist}}\,\delta(V_c)$ under subsampling. Where the assumptions of \cref{lem:covering-rate} hold, $\delta(V_c) \le C_{\mathcal M_c}(\log(n/\eta)/n)^{1/m_c}$ with probability at least $1-\eta$. This bound does not depend on the MST edges, so it is unaffected by the $k$NN restriction; the direction term remains uncontrolled, as discussed next.

\begin{remark}[On the direction component]
\label{rem:direction-limitation}
The bound~\cref{eq:reward-stability} retains a constant term $\lambda_{\mathrm{dir}}$ 
that does not vanish with $\delta(V_c) \to 0$. This reflects an intrinsic 
limitation: while the distance component $D_W$ depends only on 
$|\mathcal T(W)|$ as a point set and is therefore controlled by Hausdorff 
distance, the direction component $A_W$ depends on the orientation of the 
nearest MST edge, which is not preserved under Hausdorff convergence on 
manifolds of intrinsic dimension $\geq 2$. In our experiments we set 
$\lambda_{\mathrm{dir}} = 0.6$ (chosen via ablation; cf.\ Fig.~3(b) of the 
main paper), so the absolute size of the direction contribution remains small 
relative to the dynamic range of the reward, preserving training stability. 
A finer analysis of the direction component, e.g.\ in expectation over 
$u_{\mathrm{gen}}$, is left to future work.
\end{remark}

\bigskip

The stability of the geometric reward guarantees that the excess-risk bound 
of the main paper carries over with $\delta$ replaced by $\delta(V_c)$, up to 
the additive direction-component constant.

\begin{corollary}[Excess risk under subsampled MST]
\label{cor:excess-risk-subsample}
Under the assumptions of~\cref{thm:subsample-mst} together with those of the 
main-paper excess-risk bound, the classifier $\hat h_S$ trained on the 
MGPO-distilled set generated using the subsample-MST satisfies, with 
probability at least $1 - \eta - \delta_0$,
\begin{align*}
    R_\mu(\hat h_S) - \min_{h^* \in \mathcal H} R_\mu(h^*) 
    \;&\leq\; 2\, L_\ell\!\left(\varepsilon + \frac{\delta(V_c)^2}{\tau_c} + \lambda_{\mathrm{dir}}\right) 
    + 4\,\mathfrak R_{N_S}(\ell\circ\mathcal H) \\
    &+ 2\, L_\ell\, C_{\mathcal M}\, N_T^{-1/m} 
    + \mathcal{O}\!\left(B\sqrt{\tfrac{\log(1/\delta_0)}{N_S}}\right).
\end{align*}
where $\delta(V_c) \leq C_{\mathcal M_c}\bigl(\log(n/\eta)/n\bigr)^{1/m_c}$.
\end{corollary}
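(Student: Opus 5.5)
The plan is to treat the corollary as a composition: the excess-risk bound from the main paper is taken as a black box, and Theorem~\ref{thm:subsample-mst} supplies the replacement geometric error. First I will fix what the main-paper bound must look like for the stated form to follow. It is a standard decomposition of the excess risk of $\hat h_S$ into four parts: (a) a distribution-shift term $2L_\ell\,\Delta$, where $\Delta$ bounds the transport (or Hausdorff-type) discrepancy between the distilled sample distribution and $\mu$, with $\Delta = \varepsilon + \delta^2/\tau_c$ coming from the backward bound~\cref{eq:backward-bound}; (b) the Rademacher term $4\mathfrak R_{N_S}(\ell\circ\mathcal H)$; (c) the full-data covering term $2L_\ell C_{\mathcal M}N_T^{-1/m}$; and (d) the McDiarmid concentration term holding with probability $1-\delta_0$. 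The only ingredient that depends on the MST is $\Delta$. It enters linearly through the $L_\ell$-Lipschitz loss, so any additive perturbation of the reward-induced distance passes through with factor $2L_\ell$.

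Next I will replace the full MST $\mathcal T(U_c)$ with $\mathcal T(V_c)$. On the event of Lemma~\ref{lem:covering-rate}, which has probability at least $1-\eta$, we have $\delta(V_c) < \tau_c$. Theorem~\ref{thm:mst} applied to $W = V_c$ then gives a backward bound of $\delta(V_c)^2/\tau_c$ for $|\mathcal T(V_c)|$ in place of $\delta^2/\tau_c$. The reward surrogate used by MGPO also differs by the direction component. By~\cref{eq:reward-stability}, this part can be bounded only in the worst case by $\lambda_{\mathrm{dir}}$. Adding it to the shift term gives $\Delta' = \varepsilon + \delta(V_c)^2/\tau_c + \lambda_{\mathrm{dir}}$, and multiplying by $2L_\ell$ gives the first term of the corollary.

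Finally I will intersect the two high-probability events. A union bound over the covering event (failure probability $\eta$) and the concentration event (failure probability $\delta_0$) gives probability at least $1-\eta-\delta_0$. On this intersection $\delta(V_c)\le C_{\mathcal M_c}(\log(n/\eta)/n)^{1/m_c}$ by~\cref{eq:covering-bound}. The Rademacher, $N_T^{-1/m}$ and concentration terms are unchanged because they do not depend on which tree is used.

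The main obstacle is step two: showing that the main-paper bound depends on the MST only through an additive reward or distance quantity that $\lambda_{\mathrm{dir}}$ can perturb linearly. In particular, the $\varepsilon$-optimality of the generator must be stated relative to the reward actually optimized, here $r^{V_c}_{\mathrm{geo}}$. I would try first to state the main bound with $\varepsilon$ defined as the suboptimality measured against the $V_c$-reward. I would then use~\cref{eq:reward-stability} to convert this into suboptimality against the $U_c$-reward, at an additive cost of $\lambda_{\mathrm{dir}}$. The $D$-term perturbation $2\delta(V_c)$ is absorbed because we apply Theorem~\ref{thm:mst} to $V_c$ directly rather than to $U_c$.
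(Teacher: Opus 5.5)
Your route is the paper's route. Its proof is the same substitution of \cref{eq:reward-stability} into ``the excess-risk decomposition'', with $\delta$ replaced by $\delta(V_c)$, $\lambda_{\mathrm{dir}}$ entering through $L_\ell$, and a union bound over two events. The gap, which the paper shares, is that no excess-risk bound is stated or proved anywhere in the paper: $R_\mu$, $\hat h_S$, $L_\ell$, $\varepsilon$, $\mathfrak R_{N_S}$, $C_{\mathcal M}$, $N_T$, $m$, $B$ and $\delta_0$ are never defined. Your decomposition (a)--(d) is therefore a hypothesis, not a citable ingredient, and several steps you treat as automatic depend on its exact form. You keep $2L_\ell C_{\mathcal M}N_T^{-1/m}$ because it ``does not depend on which tree is used'', yet you call it the full-data covering term. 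If it is the coverage (forward) half of \cref{thm:mst} for the tree's node set, then building the tree on $V_c$ turns it into $2L_\ell\,\delta(V_c)$, a loss linear in $\delta(V_c)$ that the stated inequality omits. Likewise, $\hat h_S$ is trained on all classes, so the event of \cref{lem:covering-rate} is needed for every $c$ at once. With $K$ classes the union bound gives $1-K\eta-\delta_0$ (or $\eta/K$ inside the logarithm), not $1-\eta-\delta_0$.

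The step you flag as the main obstacle also does not close as you propose. Suppose the main bound can be instantiated with node set $V_c$ and $\varepsilon$ measured against $r^{V_c}_{\mathrm{geo}}$, which is what yields $\delta(V_c)^2/\tau_c$. Then no comparison with $r^{U_c}_{\mathrm{geo}}$ is needed. A latent within $\varepsilon$ of $|\mathcal T(V_c)|$ lies within $\varepsilon+\delta(V_c)^2/\tau_c$ of $\mathcal M_c$ by \cref{eq:backward-bound}, and $\lambda_{\mathrm{dir}}$ never enters; the stated inequality holds only because adding $2L_\ell\lambda_{\mathrm{dir}}\ge 0$ weakens it. Suppose instead the bound is available only for the full tree. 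Then converting $V_c$-suboptimality to $U_c$-suboptimality through \cref{eq:reward-stability} costs its entire right-hand side, including the distance part $2C_{\mathcal M_c}(\log(n/\eta)/n)^{1/m_c}$ (twice, if $\varepsilon$ is a gap to the supremum), and the geometric term becomes $\delta(U_c)^2/\tau_c$. You take the distance contribution from the $V_c$-tree directly and the direction contribution from the $V_c$-versus-$U_c$ comparison, as the paper's one-line proof implicitly does too. That is consistent with neither reading. The clean repair is to posit the main bound for an arbitrary node set $W\subset\mathcal M_c$ with $\delta(W)<\tau_c$ and $\varepsilon$ relative to $r^{W}_{\mathrm{geo}}$, apply it with $W=V_c$, and drop \cref{eq:reward-stability}. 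Whether the $N_T^{-1/m}$ term survives must then still be read off from that bound.
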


\begin{proof}
Substitute the reward stability bound from~\cref{eq:reward-stability} into the 
excess-risk decomposition: the $\delta$-dependent term is replaced by 
$\delta(V_c)$, and the direction-component constant $\lambda_{\mathrm{dir}}$ 
contributes additively through the Lipschitz constant $L_\ell$. The result 
follows from a union bound over the high-probability events 
of~\cref{thm:subsample-mst} and the main-paper excess-risk bound.
\end{proof}

\begin{corollary}[Sample complexity for fixed precision, independent of $N_c$]
\label{cor:subsample-size}
Fix any target covering radius $\delta_{\mathrm{target}} \in (0, \tau_c)$ and 
any $\eta \in (0, 1)$. Then with probability at least $1 - \eta$, 
\[
    n \;=\; \Theta\!\left(
        \frac{\log(1/\eta) + m_c \log(1/\delta_{\mathrm{target}})}
             {\delta_{\mathrm{target}}^{m_c}}
    \right)
\]
random samples suffice to ensure $\delta(V_c) \leq \delta_{\mathrm{target}}$. 
In particular, the required $n$ depends only on the manifold parameters 
$(\tau_c, m_c, \rho_{\min})$ and the target precision $\delta_{\mathrm{target}}$, 
and is \textbf{independent of the full dataset size $N_c$}.
\end{corollary}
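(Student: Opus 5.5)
We plan to invert the covering-radius bound of \cref{lem:covering-rate}. We do not solve $C_{\mathcal M_c}(\log(n/\eta)/n)^{1/m_c} \le \delta_{\mathrm{target}}$ directly, because that leaves a $\log n$ on the right-hand side. Instead we rerun the union-bound argument of the lemma's proof at a fixed scale. Set $\rho := \delta_{\mathrm{target}}$, which is below $\tau_c$. If needed, we cover at scale $\rho/2$ so that every point of $\mathcal M_c$ lies within $\rho$ of a net point of an occupied ball; this changes only constants.

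The first step bounds the net size. By the volume comparison invoked in \cref{lem:covering-rate}, the number of covering balls is at most $\mathcal N(\rho) \le c_1 \rho^{-m_c}\mathrm{vol}(\mathcal M_c)$. By the lower density bound, each ball has $\mu_c$-mass at least $p := \rho_{\min} c_2 \rho^{m_c}$. Since $V_c$ is distributed as $n$ i.i.d.\ draws from $\mu_c$ (the exchangeability remark in the setup), the probability that some ball is empty is at most
\[
\mathcal N(\rho)(1-p)^n \le c_1\mathrm{vol}(\mathcal M_c)\rho^{-m_c}e^{-np}.
\]
This is at most $\eta$ as soon as
\[
n \ge \frac{\log(1/\eta) + m_c\log(1/\rho) + \log(c_1\mathrm{vol}(\mathcal M_c))}{\rho_{\min}c_2\rho^{m_c}}.
\]
On the complementary event, every point of $\mathcal M_c$ is within $\rho$ of some sample, so $\delta(V_c)\le\delta_{\mathrm{target}}$. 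This gives the upper ($O$) half of the claimed rate.

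The second step addresses the $\Theta$. The statement asserts that $n$ of this order suffices. I would read the $\Theta$ as describing the sufficient sample size, and would optionally add a matching lower bound for necessity. For that, pick $\gtrsim \rho^{-m_c}$ disjoint balls of radius $2\rho$, each of mass $\lesssim \rho^{m_c}$ (this needs an upper density bound, an extra assumption I would state explicitly). If $n \ll \rho^{-m_c}\log(1/\rho)$, a coupon-collector argument shows that some ball is empty with constant probability. That forces $\delta(V_c) > \rho$. I expect this part to be the main obstacle, since the stated assumptions give only $\rho_{\min}$. My first attempt would therefore be to prove the sufficiency direction rigorously and flag the lower bound as holding under bounded density.

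The final step is the independence claim, which is immediate. Every constant above ($c_1, c_2, \mathrm{vol}(\mathcal M_c), \rho_{\min}, m_c$) is intrinsic to $(\mathcal M_c,\mu_c)$. The requirement $\rho<\tau_c$ is what makes the small-ball volume bound valid. Nowhere does $N_c$ enter, except through the requirement $n\le N_c$, which is needed to draw the subsample.
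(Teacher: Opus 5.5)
Your proof is correct. It uses the same mechanism as the paper: a union bound over a cover at scale comparable to $\delta_{\mathrm{target}}$, where each cell has $\mu_c$-mass $\gtrsim \rho_{\min}\delta_{\mathrm{target}}^{m_c}$. The difference is in how you organize it.

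The paper treats Lemma~\ref{lem:covering-rate} as a black box. It sets $C_{\mathcal M_c}(\log(n/\eta)/n)^{1/m_c}=\delta_{\mathrm{target}}$, reads off $n/\log(n/\eta)=\Theta(\delta_{\mathrm{target}}^{-m_c})$, and asserts that inverting this gives the stated scaling. You instead reopen the lemma's union bound at the fixed scale $\rho=\delta_{\mathrm{target}}$. That gives an explicit threshold with no implicit $\log n$ to invert. It also lets you check the small-ball condition directly (your radius is $\rho/2<\tau_c/2$), whereas the paper hides it in the lemma's requirement that its $n$-dependent radius lie below $\tau_c/2$.

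The $\rho/2$ cover is necessary, not optional. A union bound over radius-$\rho$ balls only certifies covering radius $2\rho$. This is the same factor-2 slip as in the paper's proof of the lemma, though it is harmless because the constant absorbs it.

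The paper's route is shorter and makes the corollary an immediate consequence of the lemma. Yours is self-contained and makes every constant explicit, and with it the independence from $N_c$.

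On the $\Theta$: the paper proves only sufficiency. Its $\Theta$ is just the order of the inverted upper bound, so your reading matches the paper's. If you want the matching lower bound anyway, you do not need an upper density bound.
\begin{itemize}
\item A packing argument needs only $\mathrm{vol}(B(x,r)\cap\mathcal M_c)\lesssim r^{m_c}$ for small $r$, which is a property of the manifold, not of $\mu_c$. It gives $M\gtrsim\rho^{-m_c}$ disjoint balls $B(x_j,2\rho)$.
\item Their total $\mu_c$-mass is at most $1$, so at least $M/2$ of them have mass at most $2/M$. Your coupon-collector/second-moment argument then runs on those balls.
\item A single such ball already forces $n\gtrsim\rho^{-m_c}\log(1/\eta)$, since it is empty with probability at least $e^{-4n/M}$.
\end{itemize}

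Finally, your additive $\log(c_1\mathrm{vol}(\mathcal M_c))$ is absorbed into the stated $\Theta$ only when $\delta_{\mathrm{target}}$ is bounded away from $1$ or $\eta$ is small. If $\tau_c>1$, the stated numerator can even be nonpositive for $\delta_{\mathrm{target}}\ge 1$, so the formula must be read asymptotically. That is a defect of the statement, which the paper's inversion shares, not of your proof.
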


\begin{proof}
Set $\delta(V_c) = \delta_{\mathrm{target}}$ in~\cref{eq:nsw-bound} and solve 
for $n$:
\[
\delta_{\mathrm{target}} = C_{\mathcal M_c}\left(\tfrac{\log(n/\eta)}{n}\right)^{1/m_c} 
\;\Longrightarrow\; \tfrac{n}{\log(n/\eta)} = \Theta\bigl(\delta_{\mathrm{target}}^{-m_c}\bigr).
\]
Inverting gives the stated scaling. None of the constants involved depend on $N_c$.
\end{proof}

\begin{remark}[Practical implications for ImageNet-scale data]
\label{rem:imagenet-scaling}
The empirical intrinsic dimension of natural image manifolds, estimated by 
maximum-likelihood methods, lies in the range $m_c \in [26, 43]$ for ImageNet-scale 
data~\cite{pope2021intrinsic}, with class-conditional manifolds typically 
exhibiting smaller intrinsic dimension due to reduced intra-class variation. 
By~\cref{cor:subsample-size}, the subsample size $n$ required to reach a fixed 
geometric precision $\delta_{\mathrm{target}}$ is independent of $N_c$, so a 
single choice of $n$ generalizes across datasets of different scale. In our 
implementation, taking $n = 200$ random samples per class gives a ratio of the two covering-radius rates of $(N_c/200)^{1/m_c}$ (ignoring logarithmic factors), which evaluates 
to $\approx 1.07$–$1.13$ on ImageNet-1K ($N_c \approx 1300$, $m_c \in [26, 43]$). 
This is a heuristic comparison of upper-bound rates rather than a bound on the realized covering radii, and the intrinsic-dimension estimates refer to image space rather than the VAE/PCA space. It is consistent with, but does not by itself establish, the empirical scalability of MGPO to ImageNet-1K (Table~\ref{tab:imagenet1k}).
\end{remark}
\section{Related Work}
\subsection{Optimization-Based Dataset Distillation}
\label{sec: related}
Dataset distillation was originally introduced as a meta-learning problem that directly optimizes a small set of synthetic examples such that models trained on them achieve comparable performance to training on the full dataset \cite{wang2018dataset}.
A dominant line of follow-up work formulates distillation as bi-level optimization, where the outer loop updates synthetic data and the inner loop trains a network on these synthetic samples\cite{GM}.
Subsequent studies reduce accumulative error in optimization process by matching training trajectory.
Later, SRe$^2$L~\cite{Sre2L} series of work further unleashed the ability of decoupling the distillation process by squeeze and recover approach.
Besides, an alternative line of works matches higher-level statistics such as distribution or feature in representation space to avoid expensive unrolling, improving efficiency while sometimes sacrificing fidelity of training dynamics.
Nevertheless, these optimization-based methods often suffer from heavy compute or memory overhead due to unrolling, sensitivity to architecture choices, and difficulties in scaling to higher resolutions or large numbers of classes.

\subsection{Diffusion Models for Dataset Distillation}
Recent advances have explored diffusion models as powerful generative backbones for dataset distillation, enabling the synthesis of compact yet highly representative datasets. As a pioneering work, Minimax Diffusion~\cite{minimax} leverages pre-trained diffusion models and further fine-tunes them under a minimax criteria to jointly enhance sample representativeness and diversity. Building upon this idea, D$^4$M~\cite{D4M} incorporates pre-trained text-to-image diffusion models with prototype learning, clustering latent features to learn category prototypes and thereby improving intra-class consistency and visual fidelity. To mitigate distribution mismatch and stochasticity in latent sampling, D3HR~\cite{D3HR} proposes a domain mapping framework that applies DDIM inversion~\cite{DDIM} to project VAE latents into a Gaussian space with improved normality, combined with a group-based sampling strategy to better align the generated data distribution with the real one. More recently, CaO2~\cite{cao2} rectifies feature and class level inconsistencies by inference time latent optimization. DDVLCP~\cite{ddvlcp} extends diffusion-based distillation to the vision–language setting by constructing paired image–text prototypes using large language models, which effectively enhances semantic coherence and reduces artifacts such as missing or illogical objects.
Despite their success, diffusion-based approaches are inherently likelihood-centric, concentrating samples in high-density regions, which preserves perceptual fidelity but neglects boundary and hard examples essential for discriminative training.

\subsection{Reinforcement Learning in Diffusion models}

Reinforcement learning has recently been adopted to align diffusion and flow based generative models with task level objectives that are difficult to encode through likelihood based training alone. A representative line of work builds upon Group Relative Policy Optimization (GRPO)~\cite{shao2024deepseekmathpushinglimitsmathematical}, which updates generative policies by comparing relative rewards within groups of sampled trajectories, enabling stable optimization without explicit value function estimation.

Following the introduction of GRPO, several variants have been proposed to adapt it to different generative settings. Flow-GRPO~\cite{liu2025flowgrpo} extends group relative optimization to diffusion and flow matching models by introducing controlled stochasticity into the sampling process. Temporal variants such as Temp-Flow-GRPO~\cite{tempflow} further incorporate time dependent credit assignment to emphasize critical denoising stages. Mix-GRPO~\cite{li2026mixgrpounlockingflowbasedgrpo} combines multiple sampling dynamics within a unified optimization framework to balance exploration and efficiency. More recently, structured rollout strategies such as Branch-GRPO~\cite{li2025branchgrpostableefficientgrpo} and Dance-GRPO~\cite{xue2025dancegrpo} have been proposed to improve scalability and stability through branching trajectories and group wise reward aggregation.

Despite these advances, existing GRPO-based methods primarily focus on improving generative quality or preference alignment. 
In contrast, our work leverages reinforcement learning to explicitly incorporate downstream discriminative objectives into the diffusion generation process, addressing the objective mismatch between likelihood-driven diffusion models and dataset distillation for classification.

\section{Implementation Details}
\label{sec: supply}

\begin{algorithm}[t]
\caption{Per-class MST Construction in Latent Space}
\label{alg:mst}
\begin{algorithmic}[1]

\REQUIRE Real dataset $\mathcal{T}$, VAE encoder $E(\cdot)$,
max samples per class $N_c$, PCA dimension $d$, kNN size $k$

\ENSURE Class-wise MST skeleton $\{\mathcal{T}_c\}_c$ and PCA statistics $(\mu_c, W_c)$

\FOR{each class $c$}

\STATE Sample up to $N_c$ real images $\{x_i\}$ from class $c$

\STATE Encode latents $z_i = E(x_i)$ and flatten to vectors

\STATE Stack all latent vectors into matrix $Z$

\STATE Compute centered features $Z_c = Z - \mu_c$

\STATE Perform PCA projection to obtain normalized embeddings
$u_i = W_c(z_i - \mu_c)$

\STATE Construct a $k$-nearest-neighbor graph over $\{u_i\}$

\STATE Compute the minimum spanning tree $\mathcal{T}_c$
from the sparse adjacency graph

\ENDFOR

\end{algorithmic}
\end{algorithm}

\subsection{MST Construction Details}
\label{sec:supp_mst_details}

The class-wise MST skeleton is pre-computed once before ~\mgpo~ fine-tuning. 
For each class $c$, we sample up to $N_c$ real images from the training set. 
All images are resized to $256 \times 256$, center-cropped, converted to tensors, and normalized to the range $[-1,1]$. 
Each image is then encoded by a frozen Stable Diffusion VAE encoder. 
We use the mean of the encoder posterior as the latent representation and divide it by the standard latent scaling factor $0.18215$. 
The resulting latent tensor is flattened into a vector. 
Stacking all vectors from class $c$ produces a latent matrix
\[
Z \in \mathbb{R}^{N \times D},
\]
where $N$ is the number of collected samples and $D$ is the flattened latent dimension.

\noindent\textbf{Class-wise PCA normalization.}
To obtain a compact and numerically stable embedding space, we perform PCA on the centered latent matrix. 
Let
\[
\mu_c = \frac{1}{N}\sum_{i=1}^{N} z_i
\]
be the class mean and define the centered matrix
\[
Z_c = Z - \mu_c.
\]
We compute a low-rank PCA decomposition of $Z_c$ and retain the top $d$ principal components. 
Let $V \in \mathbb{R}^{D \times d}$ denote the projection matrix and $S \in \mathbb{R}^{d}$ the singular values. 
To avoid numerical instability, singular values are lower-bounded by a small constant $\varepsilon$. 
Each latent vector $z$ is then projected into a normalized embedding space by
\[
u = g_c(z) = (z-\mu_c)V\operatorname{diag}(S)^{-1}.
\]
This transformation simultaneously performs dimensionality reduction and whitening, making distances more comparable across directions. 
In our implementation, the PCA dimension is set to $d=128$.

\noindent\textbf{Sparse $k$NN graph construction.}
After obtaining the normalized embeddings $\{u_i\}_{i=1}^{N}$, we construct a sparse $k$-nearest-neighbor graph in the embedding space. 
For each node $u_i$, we identify its $k$ nearest neighbors according to Euclidean distance. 
An adjacency matrix $A$ is then constructed where
\[
A_{ij} =
\begin{cases}
\|u_i-u_j\|_2, & \text{if } u_j \text{ is among the } k \text{ nearest neighbors of } u_i, \\
+\infty, & \text{otherwise}.
\end{cases}
\]
The graph is symmetrized by assigning the same edge weight to both $A_{ij}$ and $A_{ji}$ once either direction is observed. 
This sparse construction significantly reduces graph density and improves scalability. 
In practice, we use $k=10$.

\noindent\textbf{Minimum spanning tree extraction.}
Given the sparse adjacency matrix, we compute the minimum spanning tree using a standard graph solver. 
The resulting tree is represented as an edge set
\[
\mathcal{E}_c = \{(i,j)\},
\]
which connects all nodes while minimizing the total edge weight. 
Ideally, a spanning tree over $N$ nodes contains $N-1$ edges. 
The extracted tree therefore provides a sparse skeleton of the class manifold that preserves global connectivity while avoiding redundant edges.

\noindent\textbf{Saved statistics.}
For each class, we store the PCA mean $\mu_c$, the projection matrix $V$, the singular values $S$, the normalized node embeddings, and the MST edge list. 
These quantities define both the class-wise tree $\mathcal{T}_c$ and the projection function $g_c(\cdot)$ used during ~\mgpo~ training. 
When a generated latent $z_0$ is obtained at the end of the diffusion trajectory, it is projected using the same transformation $g_c(\cdot)$ and evaluated according to its geometric relation to the nearest edge of the corresponding MST skeleton.

\subsection{Latent Geometric Reward Details}
\label{sec:supp_latent_reward}

We provide additional implementation details for the latent geometric reward $r_{\mathrm{geo}}$ used in ~\mgpo~. 
The reward is evaluated on a two-stage diffusion trajectory that combines a stochastic SDE transition with a deterministic ODE refinement. 
This design enables both stable policy optimization and geometry-aware trajectory supervision.

\noindent\textbf{Latent projection.}
To measure geometric consistency with the class manifold, both $z_{t_m}$ and $z_0$ are projected into the class-specific PCA space used for MST construction.

Let $g_c(\cdot)$ denote the projection function defined in Section~\ref{sec:supp_mst_details}. 
For a flattened latent vector $\mathrm{vec}(z)$, the projection is
\[
u = g_c(\mathrm{vec}(z)).
\]

Thus we obtain
\[
u_{t_m} = g_c(\mathrm{vec}(z_{t_m})), \qquad
u_0 = g_c(\mathrm{vec}(z_0)).
\]

Let $\mathcal{T}_c=(\mathcal{V}_c,\mathcal{E}_c)$ denote the class-wise MST skeleton constructed from real data.

\noindent\textbf{Distance-to-manifold term.}
The first component of the latent reward encourages the terminal latent to stay close to the class manifold represented by the MST nodes.

Let $\{v_i\}$ denote the nodes in $\mathcal{V}_c$. 
The distance penalty is defined as

\[
\mathcal{L}_{\mathrm{dist}}
=
\min_i \|u_0 - v_i\|_2 .
\]

This term measures how far the generated latent deviates from the learned manifold skeleton.

\noindent\textbf{Direction alignment term.}
To additionally constrain the diffusion trajectory, we measure whether the local latent transition direction aligns with the direction of nearby MST edges.

The sample transition direction in the projected space is

\[
d_{\mathrm{sample}} =
\frac{u_0 - u_{t_m}}{\|u_0 - u_{t_m}\|_2}.
\]

For each MST edge $(i,j)\in\mathcal{E}_c$, we define

\[
m_{ij} = \frac{v_i + v_j}{2}, 
\qquad
d_{ij} = \frac{v_j - v_i}{\|v_j - v_i\|_2}.
\]

We first find the nearest edge midpoint to the intermediate latent

\[
(i^\ast,j^\ast)
=
\arg\min_{(i,j)\in\mathcal{E}_c}
\|u_{t_m} - m_{ij}\|_2.
\]

The direction penalty is then defined as

\[
\mathcal{L}_{\mathrm{dir}}
=
1 -
\langle d_{\mathrm{sample}}, d_{i^\ast j^\ast} \rangle .
\]

This term becomes small when the local diffusion update follows the tangent direction of the class manifold.

\noindent\textbf{Final latent reward.}
The geometric reward is defined as the negative weighted sum of the two penalties:

\[
r_{\mathrm{geo}}
=
-\lambda_{\mathrm{dist}}\mathcal{L}_{\mathrm{dist}}
-\lambda_{\mathrm{dir}}\mathcal{L}_{\mathrm{dir}} .
\]

Therefore, higher reward values correspond to smaller manifold deviation and better alignment with the manifold geometry.

\subsection{Pixel Classifier Reward Details}
\label{sec:supp_pixel_reward}

We provide additional implementation details for the pixel-space discriminative reward $r_{\mathrm{disc}}$ used in \mgpo.
The pixel reward evaluates the semantic consistency of generated samples in pixel space and is computed from the decoded image $\tilde{x}_0 = \mathcal{D}(z_0)$ corresponding to the terminal latent produced by the diffusion trajectory.

\noindent\textbf{Classification.}
For standard image classification, the discriminative reward is derived from a pretrained proxy classifier $f_\phi$.
Given a generated image $\tilde{x}_0$ and target class $c$, the reward is defined as the log-probability of the ground-truth class:
\begin{equation}
    r_{\mathrm{disc}}(\tilde{x}_0, c) = \log \left( \frac{\exp(f_\phi(\tilde{x}_0)_c)}{\sum_k \exp(f_\phi(\tilde{x}_0)_k)} \right),
\end{equation}
which provides a smooth reward landscape and penalizes samples near ambiguous decision boundaries.

\noindent\textbf{Object Detection.}
For object detection, the discriminative reward is derived from a pretrained Faster R-CNN~\cite{ren2016fasterrcnnrealtimeobject} with a ResNet-50 FPN backbone.
Given a generated image $\tilde{x}_0$ conditioned on class $c$, the reward is the maximum post-NMS confidence score for class $c$, as defined in ~\cref{eq: r_det} of the main paper.
The full computation is described in Algorithm~\ref{alg:det_reward}.

\begin{algorithm}[t]
\caption{Detection Reward Computation}
\label{alg:det_reward}
\begin{algorithmic}[1]
\REQUIRE Generated image $\tilde{x}_0$, target class $c$, pretrained detector $f_\phi$ (Faster R-CNN)
\ENSURE Scalar reward $r_\text{det}$
\STATE $\text{proposals} \leftarrow f_\phi.\text{region\_proposals}(\tilde{x}_0)$
\STATE $\text{detections} \leftarrow \text{NMS}(\text{proposals},\ \tau_\text{iou}{=}0.5,\ \tau_\text{score}{=}0.05)$
\STATE $\mathcal{S}_c \leftarrow \{\text{score}(d) \mid d \in \text{detections},\ \text{class}(d) = c\}$
\IF{$\mathcal{S}_c = \emptyset$}
    \STATE $r_\text{det} \leftarrow -1.0$ \hfill $\triangleright$ no valid detection for class $c$
\ELSE
    \STATE $r_\text{det} \leftarrow \max(\mathcal{S}_c)$ \hfill $\triangleright$ ~\cref{eq: r_det} of main paper
\ENDIF
\RETURN $r_\text{det}$
\end{algorithmic}
\end{algorithm}

\noindent\textbf{Semantic Segmentation.}
For semantic segmentation, the discriminative reward is derived from a pretrained LR-ASPP~\cite{howard2019searchingmobilenetv3} with a MobileNetV3-Large backbone.
Given a generated image $\tilde{x}_0$ conditioned on class $c$, the reward is the mean confidence over the top-$K$ activated pixels of the spatial probability map for class $c$, as defined in ~\cref{eq: r_seg} of the main paper.
The TopK aggregation prevents reward hacking by enforcing a minimum expected object area, discouraging the policy from exploiting isolated high-confidence pixels.
The full computation is described in Algorithm~\ref{alg:seg_reward}.

\begin{algorithm}[t]
\caption{Segmentation Reward Computation}
\label{alg:seg_reward}
\begin{algorithmic}[1]
\REQUIRE Generated image $\tilde{x}_0$, target class $c$, pretrained segmentation network $f_\phi$ (LR-ASPP), TopK fraction $\alpha$
\ENSURE Scalar reward $r_\text{seg}$
\STATE $\mathcal{P}_c \leftarrow f_\phi(\tilde{x}_0)[c] \in \mathbb{R}^{H \times W}$ \hfill $\triangleright$ softmax probability map for class $c$
\STATE $\mathbf{p} \leftarrow \mathrm{flatten}(\mathcal{P}_c) \in \mathbb{R}^{H \cdot W}$
\STATE $K \leftarrow \lfloor \alpha \cdot H \cdot W \rfloor$ \hfill $\triangleright$ number of top pixels
\IF{$\max(\mathbf{p}) < \epsilon$}
    \STATE $r_\text{seg} \leftarrow -1.0$ \hfill $\triangleright$ no activation for class $c$
\ELSE
    \STATE $r_\text{seg} \leftarrow \frac{1}{K} \sum_{k=1}^{K} \mathrm{TopK}(\mathbf{p})_k$ \hfill $\triangleright$ ~\cref{eq: r_seg} of main paper
\ENDIF
\RETURN $r_\text{seg}$
\end{algorithmic}
\end{algorithm}
\section{Additional Experiments}
\label{sec: addexp}

\subsection{Experiment Results on Different Student Model}
\begin{table*}[t]
\centering
\caption{Performance comparison with soft-label-based baselines on ImageWoof under different IPC and backbone settings. The best results in each setting are highlighted in bold.}
\resizebox{\textwidth}{!}{
\begin{tabular}{c|c|cccc|c}
\toprule
Backbone & IPC
& SRe$^2$L~\cite{Sre2L}
& Minimax~\cite{minimax}
& RDED~\cite{RDED}
& CaO$^2$~\cite{cao2}
& \mgpo~(Ours) \\
\midrule

\multirow{2}{*}{ResNet-18}
& 10 & 20.2{\tiny$\pm$0.2} & 40.1{\tiny$\pm$1.0} & 38.5{\tiny$\pm$2.1} & 45.6{\tiny$\pm$1.4} & \textbf{47.3{\tiny$\pm$0.8}} \\
& 50 & 23.3{\tiny$\pm$0.3} & 67.0{\tiny$\pm$1.8} & 68.5{\tiny$\pm$0.7} & 68.9{\tiny$\pm$1.1} & \textbf{69.7{\tiny$\pm$1.6}} \\
\midrule

\multirow{2}{*}{ResNet-50}
& 10 & 17.3{\tiny$\pm$1.7} & 37.3{\tiny$\pm$1.1} & 29.9{\tiny$\pm$2.2} & 40.1{\tiny$\pm$0.1} & \textbf{41.3{\tiny$\pm$1.1}} \\
& 50 & 24.8{\tiny$\pm$0.7} & 64.3{\tiny$\pm$0.9} & 67.8{\tiny$\pm$0.3} & 68.2{\tiny$\pm$1.1} & \textbf{69.0{\tiny$\pm$0.4}} \\
\midrule

\multirow{2}{*}{ResNet-101}
& 10 & 17.7{\tiny$\pm$0.9} & 34.2{\tiny$\pm$1.7} & 31.3{\tiny$\pm$1.3} & 36.5{\tiny$\pm$1.4} & \textbf{37.8{\tiny$\pm$1.6}} \\
& 50 & 21.2{\tiny$\pm$0.2} & 62.7{\tiny$\pm$1.6} & 59.1{\tiny$\pm$0.7} & 63.1{\tiny$\pm$1.3} & \textbf{63.5{\tiny$\pm$1.5}} \\
\bottomrule
\end{tabular}\label{backbone}
}
\end{table*}

As shown in Table~\ref{backbone}, we report the performance of different distillation methods under the soft-label based training protocol.
Experiments are conducted on ImageWoof across multiple backbone architectures, including ResNet-18, ResNet-50, and ResNet-101, under various IPC settings.
Across all settings, \mgpo~consistently achieves the best performance.
Notably, the performance gains remain stable across different backbone capacities, indicating that the distilled datasets produced by \mgpo~generalize well to diverse student architectures. In particular, under low IPC setting (IPC = 10),~\mgpo~boosts ResNet-18 performance from 45.6\% (CaO$^2$) to 47.3\%, and ResNet-50 accuracy from 40.1\% to 41.3\%. Under higher IPC (IPC = 50),~\mgpo~ continues to outperform strong baselines such as Minimax~\cite{minimax}, RDED~\cite{RDED}, and CaO$^2$~\cite{cao2}. 
This suggests that our reward-guided distillation framework is able to capture class-discriminative information that transfers effectively across models of varying depth and capacity.

\subsection{Hyperparameter Analysis}
We further study the sensitivity of our method to several key hyperparameters. All experiments are conducted on ImageWoof with 10 IPC distilled datasets. Specifically, we analyze the parameters involved in the construction of the MST that models the latent geometric structure. By varying these parameters while keeping other settings fixed, we evaluate the robustness of our framework and examine how they influence the quality of the generated synthetic data and the downstream performance~(Figure~\ref{fig:mst_param}).

\begin{figure}[H]
    \centering
    \includegraphics[width=0.95\linewidth]{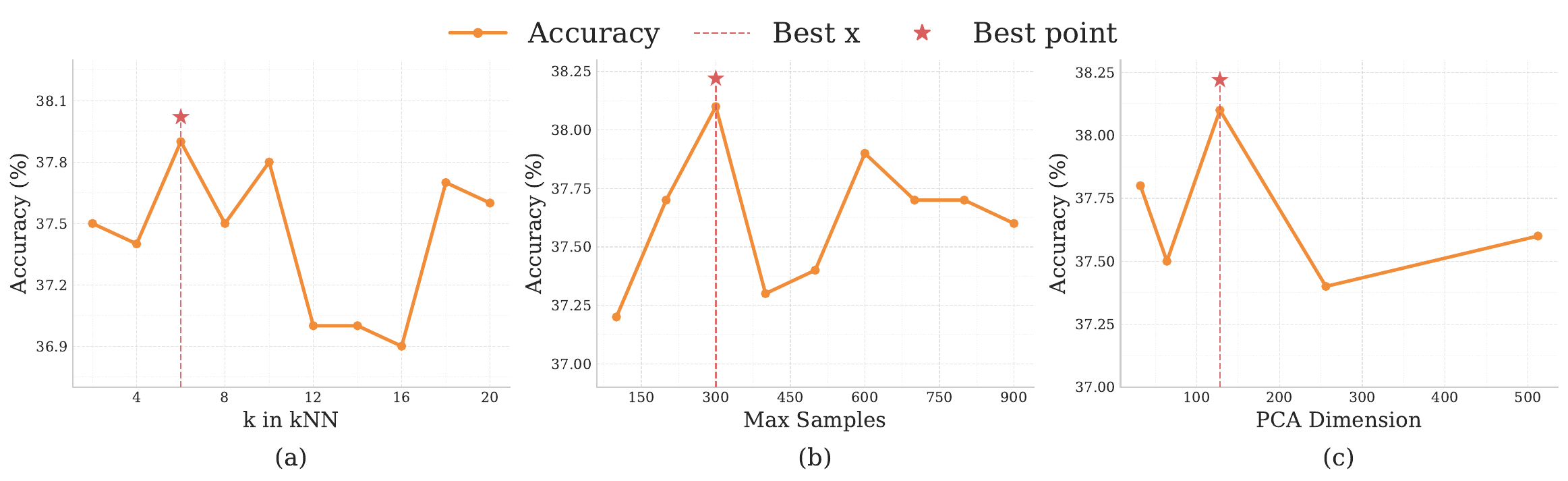}
    \caption{Ablation studies on the hyperparameters used in MST construction, including
the number of neighbors k in the kNN graph, the maximum number of sampled
latent features per class, and the PCA projection dimension.}
    \label{fig:mst_param}
\end{figure}

Overall, MGPO exhibits stable performance across a broad range of hyperparameter settings.
The results suggest that the method is not overly sensitive to individual parameters, and moderate values generally provide the best balance between optimization stability and reward effectiveness.

\section{Visualization}
\label{sec: vis}

\subsection{Results on Classification}
Figure~\ref{fig:woof_compare},~\ref{fig:nette_compare}, and \ref{fig:idc_compare} demonstrate the visualization results on ImageWoof, ImageNette and ImageIDC, respectively.
\begin{figure*}[t]
\centering
\includegraphics[
width=1.0\textwidth,
]{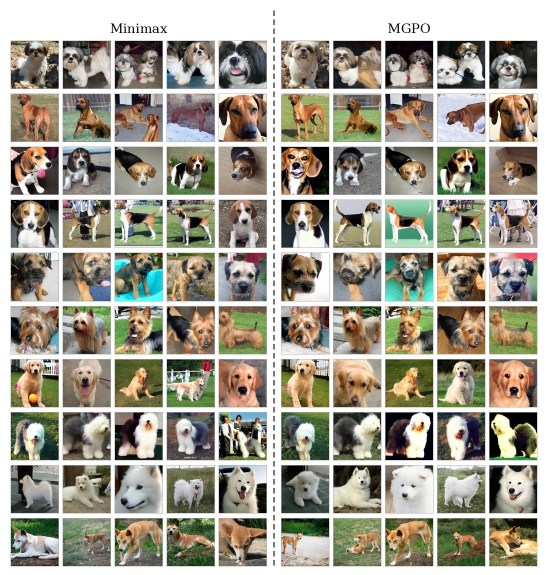}
\caption{Comparison between samples generated by Minimax Diffusion (left) and generated by the proposed \mgpo~method (right) on ImageWoof (Classification).}
\label{fig:woof_compare}
\end{figure*}

\begin{figure}
    \centering
    \includegraphics[width=1\linewidth]{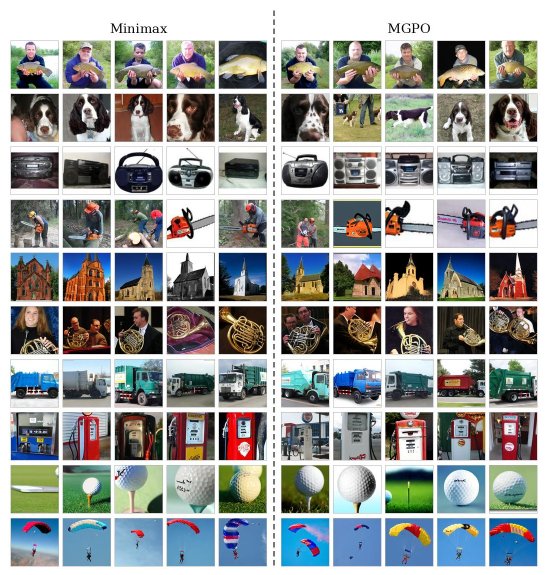}
    \caption{Comparison between samples generated by Minimax Diffusion (left) and generated by the proposed \mgpo~method (right) on ImageNette (Classification).}
    \label{fig:nette_compare}
\end{figure}

\begin{figure}
    \centering
    \includegraphics[width=1\linewidth]{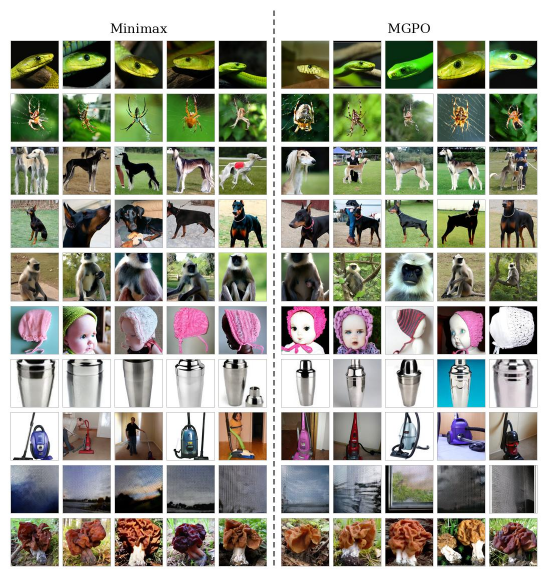}
    \caption{Comparison between samples generated by Minimax Diffusion (left) and generated by the proposed \mgpo~method (right) on ImageIDC (Classification).}
    \label{fig:idc_compare}
\end{figure}

\subsection{Results on Detection}
Figure~\ref{fig:detection} demonstrates the visualization results on detection results.
\begin{figure}[h]
    \centering
    \includegraphics[width=1.0\linewidth]{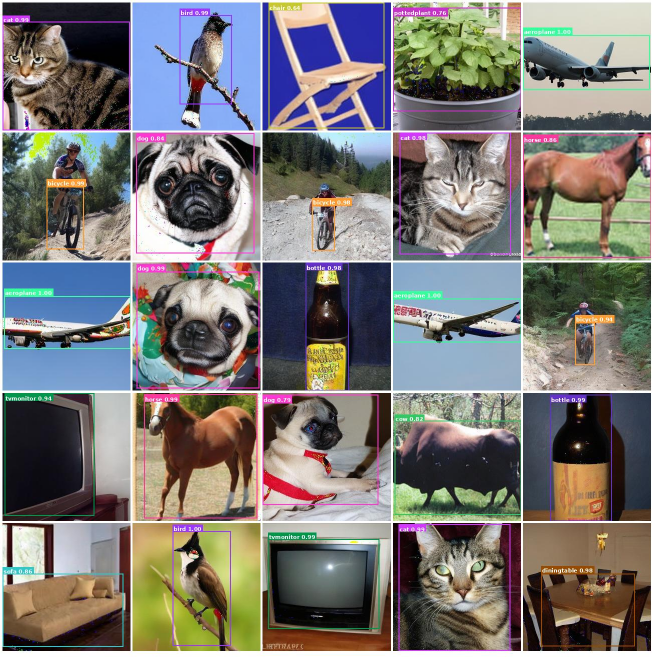}
    \caption{Samples generated by the proposed \mgpo~(Object Detection).}
    \label{fig:detection}
\end{figure}

\subsection{Results on Segmentation}
Figure~\ref{fig:segmentation} demonstrates the visualization results on segmentation results.
\begin{figure}[h]
    \centering
    \includegraphics[width=1.0\linewidth]{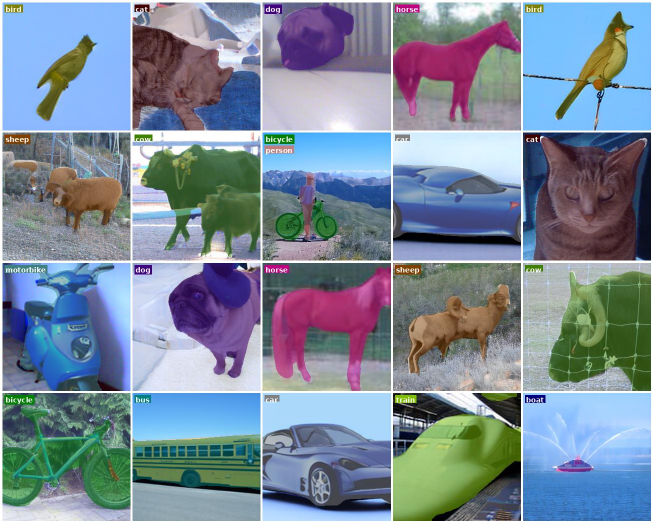}
    \caption{Samples generated by the proposed \mgpo~(Segmentation).}
    \label{fig:segmentation}
\end{figure}

\section{Limitations and Broader Impacts}
\label{sec:limitations_impacts}

\paragraph{Limitations.} 
While MGPO establishes an efficient and adaptable paradigm for task aware dataset distillation, it still possesses certain limitations. 
First, from a theoretical perspective, as discussed in \cref{rem:direction-limitation}, the direction component of the geometric reward relies on local tangent approximations which may exhibit fluctuations on high intrinsic dimension manifolds. 
Second, although MGPO eliminates the computational bottleneck of inference time guidance, the reinforcement learning policy optimization phase introduces additional offline training overhead compared to purely training free selection methods. 
Finally, scaling the class wise MST construction to datasets with millions of categories might require more advanced hierarchical clustering algorithms to maintain strict scalability.

\paragraph{Broader Societal Impacts.} 
Our research presents significant positive societal impacts by heavily compressing the data required to train deep learning models. This inherently reduces the carbon footprint and energy consumption associated with large scale model training, democratizing artificial intelligence by allowing researchers with limited computational resources to train state of the art models. 
However, as a potential negative impact, if the pre-trained reward models (such as the proxy classifier or detector) contain inherent biases regarding demographic attributes, the policy optimization process might internalize and amplify these biases within the distilled dataset. Therefore, practitioners must ensure that the reward models used for distillation are thoroughly audited for fairness and equity prior to deployment in sensitive applications.


\end{document}